\long\def\BOC#1\EOC{\message{(Commented text )}}
\long\def\BOCC#1\EOCC{\message{(Commented text )}}
\long\def\BOCCC#1\EOCCC{\message{(Commented text )}}
\long\def\optional#1{\empty}
\long\def\NB#1{}
\def\bi{\begin{itemize}}
\def\ii{\item}
\def\ei{\end{itemize}} 
\def\beq{\begin{equation}}
\def\eeq#1{\label{#1}\end{equation}}
\def\ba{\begin{array}}
\def\ea{\end{array}}
\def\j#1{\hbox{\it #1\/}}
\def\mi#1{\mathit{#1}}
\def\lpmln{{\rm LP}^{\rm{MLN}}}
\def\no{\j{not}}
\def\ar{\leftarrow}
\def\rar{\rightarrow}
\def\false{\hbox{\bf f}}
\def\true{\hbox{\bf t}}
\def\lpmln{\hbox{\rm LP}^{\rm{MLN}}}
\def\lpmln{{\rm LP}^{\rm{MLN}}}
\def\sm{\rm SM}
\def\ML{\mathbb{L}}
\def\proof{\noindent{\bf Proof}.\hspace{3mm}}
\def\qed{\quad \vrule height7.5pt width4.17pt depth0pt \medskip}
\newcommand{\argmin}{\mathop{\mathrm{argmin}}\limits}
\newcommand{\argmax}{\mathop{\mathrm{argmax}}\limits}
\newtheorem{prop}{Proposition}
\newtheorem{thm}{Theorem}
\newtheorem{lemma}{Lemma}
\newtheorem{example}{Example}
\newcommand{\la}{\textsc{lpmln2asp} }
\newcommand{\lm}{\textsc{lpmln2mln} }
\newcommand{\lp}{$\lpmln$ }
\newcommand{\al}{{\sc alchemy}}
\newcommand{\ro}{\textsc{rockit}}
\newcommand{\tu}{\textsc{tuffy}}
\newcommand{\cli}{\textsc{clingo} }
\newcommand{\btm}[4]{\makecell{#1 + #2 \\ {[#3/#4]}}}
\newcommand{\btc}[3]{\makecell{#1 \\ {[#2/#3]}}}
\newcommand{\btr}[2]{\makecell{#1 \\ {[#2]}}}
\newcommand{\bex}[1]{\begin{example}#1\end{example}}
\begin{document}

\title{Computing $\lpmln$ Using ASP and MLN Solvers}  

\author[Lee, Talsania \& Wang]{Joohyung Lee, Samidh Talsania, and Yi Wang \\
School of Computing, Informatics, and Decision Systems Engineering \\
 Arizona State University, Tempe, USA \\
 \email{\{joolee, stalsani, ywang485\}@asu.edu}
}

\maketitle

\begin{abstract}
$\lpmln$ is a recent addition to probabilistic logic programming languages. Its main idea is to overcome the rigid nature of the stable model semantics by assigning a weight to each rule in a way similar to Markov Logic is defined. We present two implementations of $\lpmln$, {\sc lpmln2asp} and {\sc lpmln2mln}. System {\sc lpmln2asp} translates $\lpmln$ programs into the input language of answer set solver {\sc clingo}, and using  weak constraints and stable model enumeration, it can compute most probable stable models as well as exact conditional and marginal probabilities. System {\sc lpmln2mln} translates $\lpmln$ programs into the input language of Markov Logic solvers, such as {\sc alchemy}, {\sc tuffy},  and {\sc rockit}, and allows for performing approximate probabilistic inference on $\lpmln$ programs.  
We also demonstrate the usefulness of the  $\lpmln$ systems for computing other languages, such as ProbLog and Pearl's Causal Models, that are shown to be translatable into $\lpmln$. 
\end{abstract}

\section{Introduction}\label{sec:intro}

$\lpmln$ is a simple extension of answer set programs with the concept of weighted rules, whose weight scheme is adopted from that of Markov Logic \cite{richardson06markov}. Like Markov Logic, not all $\lpmln$ rules have to be true but, roughly speaking, the more rules are true, the larger weight is assigned to the corresponding stable model.

It is shown that both answer set programs and Markov Logic can be easily embedded in $\lpmln$ (Theorems~1, 2 of the paper by \citeN{lee16weighted}).  The other direction is more interesting from a computational point of view because it provides ways to compute $\lpmln$ using existing implementations of ASP and MLN (Markov Logic Network) solvers. This paper further develops the translations in this direction, presents two implementations of $\lpmln$ based on them along with a few new theorems justifying the implementations, and illustrates the usefulness of the systems. 

It is shown by~\citeN{balai16ontherelationship} that $\lpmln$ programs can be turned into P-log programs. We present a similar but simpler translation that turns $\lpmln$ programs into answer set programs. However, for a non-ground $\lpmln$ program, the translation yields an answer set program that is unsafe, so a direct implementation of this idea has a drawback. 
Instead, we develop an implementation based on the translation by~\citeN{lee17lpmln} that turns (ground) $\lpmln$ programs into answer set programs containing weak constraints so that the most probable stable models of an $\lpmln$ program coincide with the optimal stable models of the translated ASP program. Going further, we show how to map the penalty of each stable model of the translated ASP program to the probability of each stable model of a (non-ground) $\lpmln$ program, and use this result to compute probabilistic queries for an $\lpmln$ program using an ASP solver {\sc clingo}.
The input language of {\sc lpmln2asp} is familiar to  the users of {\sc clingo} because its syntax is a simple extension of {\sc clingo} rules prepended by weights, thereby allowing many advanced constructs of the {\sc clingo} language, such as aggregates and conditional literals, in the context of $\lpmln$.

A different method to compute an $\lpmln$ program is by converting it into a Markov Logic Network \cite[Theorem~3]{lee16weighted}, similar to the reduction of answer set programs to propositional logic, and then invoking MLN solvers, such as {\sc alchemy}, {\sc tuffy}, and {\sc rockit}. While it is possible to turn any $\lpmln$ program into an equivalent MLN by adding all loop formulas, in practice, the straightforward implementation does not yield an effective computation. Thus, we limit attention to the ``tight'' fragment of $\lpmln$ programs that can be easily converted into MLNs using the process of completion.
Even so, the straightforward encoding of completion formulas in Markov Logic may lead to a blow-up in CNF conversion performed by {\sc alchemy} because the conversion is naively implemented in {\sc alchemy}. Furthermore, the input languages of {\sc tuffy} and {\sc rockit} do not even allow nested formulas, which are needed to encode completion formulas. So, {\sc lpmln2mln} implements some equivalent transformation using auxiliary atoms to avoid the blow-up in CNF conversion and takes care of differences in the input language of different MLN solvers.
The input language of {\sc lpmln2mln} resembles that of {\sc alchemy} and is converted into one of the input languages of {\sc alchemy}, {\sc tuffy}, and {\sc rockit} depending on the mode selected. 
The system utilizes approximate probabilistic inference methods or exact optimization methods supported by the MLN solvers.

The implementations are not only interesting for computing $\lpmln$. System {\sc lpmln2asp} can be used to derive the most probable stable models even when the standard answer set program is inconsistent. This feature could be useful in debugging an inconsistent answer set program or deriving some meaningful conclusions from an inconsistent knowledge base. Also, both implementations can be used to compute  other formalisms,  such as Markov Logic, ProbLog \cite{deraedt07problog}, Pearl's Probabilistic Causal Models \cite{pearl00causality}, and P-log \cite{baral09probabilistic}, which are shown to be translatable into $\lpmln$~\cite{lee16weighted,lee15markov,lee17lpmln}.

The systems are publicly available at
\url{http://reasoning.eas.asu.edu/lpmln/}
 along with the user manual and examples.  

The paper is organized as follows. 
Section~\ref{sec:prelim} reviews the language $\lpmln$, which is based on the concept of reward, and presents a reformulation of $\lpmln$ based on the concept of penalty. 
Section~\ref{sec:lpmln2asp} shows two translations of $\lpmln$ programs into answer set programs, one based on the reward-based way and another based on the penalty-based way, and presents system {\sc lpmln2asp} that implements the penalty-based way. 
Section~\ref{sec:lpmln2mln} shows a translation of tight $\lpmln$ programs into Markov Logic Networks and presents system {\sc lpmln2mln} that implements the translation.
Section~\ref{sec:example} gives a comparison and running statistics of these implementations. 
Section~\ref{sec:other} shows how to use these systems to compute other probabilistic logic languages that are shown to be translatable into $\lpmln$.
The proofs of the theorems and more experiments can be found in the online appendix accompanying the paper at the TPLP archive \cite{lee17computing-online}.

\section{Language $\lpmln$} \label{sec:prelim}

\subsection{Review: $\lpmln$} \label{ssec:lpmln}

We review the definition of $\lpmln$ from the paper by~\citeN{lee16weighted}.
An $\lpmln$ program is a finite set of weighted rules $w: R$ where $R$ is a rule (as allowed in the input language of ASP solver {\sc clingo}), and   $w$ is a real number (in which case, the weighted rule is called {\em soft}) or $\alpha$ for denoting the infinite weight (in which case, the weighted rule is called {\em hard}). An $\lpmln$ program is called {\em ground} if its rules contain no variables. We assume a finite Herbrand Universe so that the ground program is finite.
Each ground instance of a non-ground rule receives the same weight as the original non-ground formula.

For any ground $\lpmln$ program $\Pi$ and any interpretation~$I$, 
$\overline{\Pi}$ denotes the usual (unweighted) ASP program obtained from $\Pi$ by dropping the weights, and
${\Pi}_I$ denotes the set of $w: R$ in $\Pi$ such that $I\models R$, 
and $\sm[\Pi]$ denotes the set $\{I \mid \text{$I$ is a stable model of $\overline{\Pi_I}$}\}$.
The {\em unnormalized weight} of an interpretation $I$ under $\Pi$ is defined as 
\[
 W_\Pi(I) =
\begin{cases}
  exp\Bigg(\sum\limits_{w:R\;\in\; {\Pi}_I} w\Bigg) & 
      \text{if $I\in\sm[\Pi]$}; \\
  0 & \text{otherwise}. 
\end{cases}
\]
The {\em normalized weight} (a.k.a. {\em probability}) of an interpretation $I$ under~$\Pi$ is defined as  \[ 
\small 
  P_\Pi(I)  = 
  \lim\limits_{\alpha\to\infty} \frac{W_\Pi(I)}{\sum\limits_{J\in {\rm SM}[\Pi]}{W_\Pi(J)}}. 
\] 
Interpretation $I$ is called a {\sl (probabilistic) stable model} of $\Pi$ if $P_\Pi(I)\ne 0$. The most probable stable models of $\Pi$ are the stable models with the highest probability.

\subsection{Reformulating $\lpmln$ Based on the Concept of Penalty}\label{ssec:reformulation}

In the definition of the $\lpmln$ semantics by \citeN{lee16weighted}, the weight assigned to each stable model can be regarded as ``rewards": the more rules are true in deriving the stable model, the larger weight is assigned to it. 
In this section, we reformulate the $\lpmln$ semantics in a ``penalty'' based way.
More precisely, the penalty based weight of an interpretation $I$ is defined as the exponentiated negative sum of the weights of the rules that are not satisfied by $I$ (when $I$ is a stable model of $\overline{\Pi_I}$).  
Let 
\[
 W^{\rm pnt}_\Pi(I) =
\begin{cases}
  exp\Bigg(-\sum\limits_{w:R\;\in\; {\Pi} \text{ and } I\not\models R} w\Bigg) & 
      \text{if $I\in\sm[\Pi]$}; \\
  0 & \text{otherwise} 
\end{cases}
\]
and
$$ 
  P^{\rm pnt}_\Pi(I) = 
    \lim\limits_{\alpha\to\infty} \frac{W^{\rm pnt}_\Pi(I)}{\sum\limits_{J\in {\rm SM}[\Pi]}{W^{\rm pnt}_\Pi(J)}}.
$$ 

The following theorem tells us that the $\lpmln$ semantics can be reformulated using the concept of a penalty-based weight. 

\begin{thm}\label{thm:lpmln-pnt}\optional{thm:lpmln-pnt}
For any $\lpmln$ program $\Pi$ and any interpretation $I$, 
\[
  W_{\Pi}(I) \propto W^{\rm pnt}_{\Pi}(I)
\text{ \ \ \ \ and \ \ \ \ }
  P_\Pi(I) = P_\Pi^{\rm pnt}(I).
\]
\end{thm}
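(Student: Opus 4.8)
The plan is to establish the stronger pointwise identity $W_{\Pi}(I) = e^{t}\cdot W^{\rm pnt}_{\Pi}(I)$, where $t = \sum_{w:R\,\in\,\Pi} w$ is the total weight of $\Pi$ — a quantity that does not depend on $I$ — and to derive both assertions of the theorem from it.

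First I would observe that, for an interpretation $I$ with $I\in\sm[\Pi]$, the weighted rules of $\Pi$ split into those satisfied by $I$ (namely the rules in $\Pi_I$) and those not satisfied by $I$, so that $\sum_{w:R\,\in\,\Pi_I} w = t - \sum_{w:R\,\in\,\Pi,\ I\not\models R} w$. Exponentiating and comparing with the definitions of $W_{\Pi}$ and $W^{\rm pnt}_{\Pi}$ on stable models yields $W_{\Pi}(I) = e^{t}\cdot W^{\rm pnt}_{\Pi}(I)$. For $I\notin\sm[\Pi]$ both $W_{\Pi}(I)$ and $W^{\rm pnt}_{\Pi}(I)$ are $0$ by definition, so the same identity holds there trivially; hence it holds for \emph{every} interpretation $I$, which is precisely $W_{\Pi}(I)\propto W^{\rm pnt}_{\Pi}(I)$.

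For the equality of probabilities I would substitute this identity into the definition of $P_{\Pi}(I)$: for each fixed finite value of $\alpha$, the factor $e^{t}$ occurs in the numerator and in every term of the denominator $\sum_{J\in\sm[\Pi]} W_{\Pi}(J)$, so it cancels, giving $\frac{W_{\Pi}(I)}{\sum_{J} W_{\Pi}(J)} = \frac{W^{\rm pnt}_{\Pi}(I)}{\sum_{J} W^{\rm pnt}_{\Pi}(J)}$ for every $\alpha$. Taking $\lim_{\alpha\to\infty}$ on both sides then gives $P_{\Pi}(I) = P^{\rm pnt}_{\Pi}(I)$. For a non-ground $\Pi$ the argument is applied to its finite ground instantiation, through which $W_{\Pi}$, $P_{\Pi}$, and their penalty-based counterparts are defined.

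The argument is routine, and the only place that calls for a little care is the presence of hard rules: $t$ contains $\alpha$-summands, so $e^{t}$ is really a function of $\alpha$ rather than a genuine constant. This causes no difficulty, since $e^{t}$ is still independent of $I$, so the cancellation in the probability ratio is valid for each finite $\alpha$ before the limit is taken, and $W_{\Pi}(I)\propto W^{\rm pnt}_{\Pi}(I)$ is to be read in the same parametric-in-$\alpha$ sense used elsewhere in the definition of $\lpmln$. It is also worth stating explicitly that both weight functions vanish outside $\sm[\Pi]$, as this is what lets the identity hold for all interpretations rather than just stable models and makes the two denominators range over the same index set.
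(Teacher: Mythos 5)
Your proposal is correct and follows essentially the same route as the paper's proof: both factor out the $I$-independent total weight $e^{t}=TW_{\Pi}=\exp\bigl(\sum_{w:R\,\in\,\Pi} w\bigr)$ via the satisfied-equals-total-minus-unsatisfied decomposition and then cancel it in the probability ratio. Your explicit handling of the $\alpha$-dependence of $t$ (cancelling for each finite $\alpha$ before taking the limit) is in fact slightly more careful than the paper's write-up, which suppresses the limit, but it is the same argument.
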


Although the penalty-based reformulation appears to be more complicated, it has a few desirable features. One of them is that adding a trivial rule does not affect the weight of an interpretation,  which is not the case with the original definition. More importantly, this reformulation leads to a better translation of $\lpmln$ programs into answer set programs as we discuss in Section~\ref{ssec:lpmln2asp-pnt}.

\section{Turning $\lpmln$ into ASP with Weak Constraints}\label{sec:lpmln2asp}

\subsection{Review: Weak Constraints} \label{ssec:weak} 

A {\em weak constraint} \cite{buccafurri00enhancing,calimeri12aspcore2} has the form 
\[
  {\tt :\sim}\ F \ \ \  [\j{Weight}\ @\ \j{Level}] 
\] 
where $F$ is a conjunction of literals, $\j{Weight}$ is a real number, and $\j{Level}$ is a nonnegative integer. 

Let $\Pi$ be a program $\Pi_1\cup\Pi_2$, where $\Pi_1$ is an answer set program that does not contain weak constraints, and $\Pi_2$ is a set of ground weak constraints. We call $I$ a stable model of $\Pi$ if it is a stable model of $\Pi_1$.
For every stable model $I$ of $\Pi$ and any nonnegative integer $l$, the {\em penalty} of $I$ at level~$l$, denoted by $\j{Penalty}_\Pi(I,l)$,  is defined as 
\[
\sum\limits_{:\sim\ F [w@l]\in\Pi_2,\atop I\models F}  w .
\]
For any two stable models $I$ and $I'$ of~$\Pi$, we say $I$ is {\em dominated} by $I'$ if 
\bi
\ii there is some nonnegative integer $l$ such that $\j{Penalty}_\Pi(I',l) < \j{Penalty}_\Pi(I,l)$ and 
\ii for all integers $k>l$, $\j{Penalty}_\Pi(I',k) = \j{Penalty}_\Pi(I,k)$.
\ei
A stable model of $\Pi$ is called {\em optimal} if it is not dominated by another stable model of $\Pi$. 

The input language of {\sc clingo} allows non-ground weak constraints that contain tuples of terms. 

\subsection{Turning $\lpmln$ into ASP: Reward Way}\label{ssec:lpmln2asp-rwd}

In the paper by~\citeN{balai16ontherelationship}, it is shown that $\lpmln$ programs can be turned into P-log. In this section, we show that using a similar translation, it is even possible to turn $\lpmln$ programs into  answer set programs. 

We turn each (possibly non-ground) rule
\[
   w_i:\ \ \ \  \j{Head}_i({\bf x}) \leftarrow \j{Body}_i({\bf x}) \  
\] 
in an $\lpmln$ program $\Pi$, where $i$ is the index of the rule and ${\bf x}$ is the list of global variables in the rule, 
into ASP rules
\beq
\ba {rcl}
  {\tt sat}(i, w_i, {\bf x}) & \ar & Head_i({\bf x})\\
  {\tt sat}(i, w_i, {\bf x}) & \ar & {\tt not}\ Body_i({\bf x})\\
    Head_i({\bf x})  & \ar &  Body_i({\bf x}),\ {\tt not}\ {\tt not}\ {\tt sat}(i, w_i, {\bf x})\\
           &:\sim & {\tt sat}(i, w_i, {\bf x}). \ \ \ [-w'_i@l, i, {\bf x}]
\ea
\eeq{lpmln2asp-rwd}
where (i) $w'_i = 1$ and $l=1$ if $w_i$ is $\alpha$; and (ii) $w'_i = w_i$ and $l=0$ otherwise.\footnote{{\sc clingo} restricts the weights in weak constraints to be integers only. To implement the translation using {\sc clingo}, we need to turn $w'_i$ into an integer by multiplying some factor.} 

Intuitively, a ground ${\tt sat}$ atom is true if the corresponding ground rule obtained from the original program is true. For each true ${\tt sat}$ atom, a weak constraint imposes on the stable model the opposite of the weight as a penalty, which can be viewed as imposing the weight as a reward.

By ${\sf lpmln2asp^{rwd}}(\Pi)$ we denote the resulting ASP program containing weak constraints. The following theorem states the correctness of the translation. Let $Gr(\Pi)$ be the ground program obtained from $\Pi$ by replacing global variables with the Herbrand Universe.

\begin{thm}\label{thm:lpmln2asp-rwd}\optional{thm:lpmln2asp-rwd}
For any $\lpmln$ program $\Pi$, there is a 1-1 correspondence $\phi$ between $\sm[\Pi]$ and the set of stable models of ${\sf lpmln2asp^{rwd}}(\Pi)$, where 
\[ 
  \phi(I)=I\cup \{{\tt sat}(i, w_i, {\bf c}) \mid 
                   w_i:\j{Head}_i({\bf c}) \ar \j{Body}_i({\bf c})\ \text{in ${Gr}(\Pi)$}, 
                   I\models \j{Body}_i({\bf c})\rar\j{Head}_i({\bf c})\}.
\]
Furthermore,
\begin{equation}\label{eq:wgt-sat}
   W_{\Pi}(I)= exp \Bigg(\sum_{{\tt sat}(i, w_i, {\bf c}) \in \phi(I)} w_i\Bigg).
\end{equation}
Also, $\phi$ is a 1-1 correspondence between the most probable stable models of $\Pi$ and the optimal stable models of ${\sf lpmln2asp^{rwd}}(\Pi)$.
\end{thm}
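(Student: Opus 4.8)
The plan is to establish the three assertions of Theorem~\ref{thm:lpmln2asp-rwd} in order: first the 1-1 correspondence $\phi$, then the weight formula~(\ref{eq:wgt-sat}), and finally the correspondence between most probable and optimal stable models, which follows by combining the first two with Theorem~\ref{thm:lpmln-pnt}. The key technical observation, which I would isolate as the first step, is a local analysis of the three ASP rules generated for each ground instance $w_i : \j{Head}_i(\mathbf{c}) \ar \j{Body}_i(\mathbf{c})$. I would argue that in any interpretation $J$ of the translated program, the atom ${\tt sat}(i, w_i, \mathbf{c})$ behaves like a ``choice'' governed by whether the original rule is satisfied: if $J \not\models \j{Body}_i(\mathbf{c})$ or $J \models \j{Head}_i(\mathbf{c})$ (i.e.\ $J$ satisfies the implication $\j{Body}_i \rar \j{Head}_i$), then ${\tt sat}(i,w_i,\mathbf{c})$ must be in $J$ for $J$ to be stable, because one of the first two rules fires; and conversely the third rule, with its double negation ${\tt not\ not}\ {\tt sat}(\dots)$, merely permits $\j{Head}_i$ to be derived when ${\tt sat}$ is present, without forcing it. The upshot is that the ${\tt sat}$ atoms in a stable model $J$ are exactly $\{{\tt sat}(i,w_i,\mathbf{c}) : J \models \j{Body}_i(\mathbf{c}) \rar \j{Head}_i(\mathbf{c})\}$, so $J$ is determined by its restriction $I$ to the original signature via $J = \phi(I)$.

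The heart of the correspondence is then a reduct argument: I would show that for an interpretation $I$ over the original atoms, $I \in \sm[\Pi]$ (i.e.\ $I$ is a stable model of $\overline{\Gr(\Pi)_I}$) if and only if $\phi(I)$ is a stable model of ${\sf lpmln2asp^{rwd}}(\Pi)$. One direction computes the reduct of the translated program with respect to $\phi(I)$: the first two translation rules have no negative body apart from the ${\tt not}\ \j{Body}_i$ in the second, and for the instances with ${\tt sat}(i,w_i,\mathbf{c})\in\phi(I)$ the third rule's double negation is satisfied so it reduces to $\j{Head}_i \ar \j{Body}_i$ exactly when the original rule is satisfied by $I$ — matching $\overline{\Gr(\Pi)_I}$ — while for instances where ${\tt sat}$ is absent (the original rule is violated, so $\j{Body}_i$ holds but $\j{Head}_i$ does not) the third rule drops out of the reduct, again matching the fact that such a rule is excluded from $\Gr(\Pi)_I$. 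One must also check that the added ${\tt sat}$ atoms are supported (by the first two rules) and that no spurious models arise; the other direction runs the same computation backwards. The weak constraints contribute nothing to stable-model membership by the definition in Section~\ref{ssec:weak}, so they can be ignored until the last part.

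For the weight formula~(\ref{eq:wgt-sat}), observe that $\j{Head}_i(\mathbf{c}) \ar \j{Body}_i(\mathbf{c})$ belongs to $\Gr(\Pi)_I$ precisely when $I \models \j{Body}_i(\mathbf{c}) \rar \j{Head}_i(\mathbf{c})$, which is exactly the condition defining membership of ${\tt sat}(i,w_i,\mathbf{c})$ in $\phi(I)$; hence $\sum_{{\tt sat}(i,w_i,\mathbf{c})\in\phi(I)} w_i = \sum_{w:R\in\Gr(\Pi)_I} w$, and exponentiating gives $W_\Pi(I)$ directly from its definition (using that $I\in\sm[\Pi]$ in this case). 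Finally, for the optimality statement I would compute $\j{Penalty}_{{\sf lpmln2asp^{rwd}}(\Pi)}(\phi(I), l)$ from the weak constraints: at level $0$ it is $-\sum\{w_i : w_i \text{ soft}, {\tt sat}(i,w_i,\mathbf{c})\in\phi(I)\}$ and at level $1$ it is $-(\text{number of hard rules satisfied by }I)$, equivalently $(\text{number of hard rules violated})$ minus a constant independent of $I$. Comparing two stable models $\phi(I)$, $\phi(I')$ under the domination order — level $1$ first, then level $0$ — mirrors exactly the $\alpha\to\infty$ limit in $P_\Pi$: a model minimizing hard-rule violations dominates at level $1$, and among those the ones maximizing the soft-weight sum are undominated, which by Theorem~\ref{thm:lpmln-pnt} (or directly from the definition of $P_\Pi$) are precisely the most probable stable models of $\Pi$.

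The main obstacle I anticipate is the reduct computation for the third translation rule, $\j{Head}_i \ar \j{Body}_i, {\tt not\ not}\ {\tt sat}(i,w_i,\mathbf{c})$: one must be careful that the double negation is handled correctly in the definition of stable models used (via the reduct for programs with nested negation, or an equivalent translation to a normal rule with an auxiliary atom), and that the interplay between the ${\tt sat}$-defining rules and this rule does not create unsupported-ness or unintended minimality effects — in particular verifying that when ${\tt sat}(i,w_i,\mathbf{c})\notin\phi(I)$, the original rule really is forced to be violated and cannot be vacuously ``added back.'' Everything else is bookkeeping over the index set of ground rule instances.
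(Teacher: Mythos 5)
Your plan is correct and reaches the same three milestones as the paper's proof, in the same order: a 1-1 correspondence lemma for the weak-constraint-free part of the translation, the weight identity \eqref{eq:wgt-sat} read off directly from the definition of $\phi$, and a level-by-level penalty comparison (level 1 for hard rules first, then level 0 for soft rules) that mirrors the $\alpha\to\infty$ limit, exactly as in the paper's chain of $\argmax/\argmin$ equivalences. The one genuine difference is how the correspondence itself is established. The paper (Lemma~\ref{lem:lpmln2asp_rwd}) splits the ground translation into the ${\tt sat}$-defining part and the modified original rules, invokes the splitting theorem, and then handles the ${\tt sat}$ part via completion (it is tight on the ${\tt sat}$ atoms) and the other part via a reduct-minimality argument; you instead work monolithically, first pinning down the ${\tt sat}$ atoms of any stable model by a supportedness argument and then computing the reduct of the whole translated program by hand, with the two cases on whether ${\tt sat}(i,w_i,{\bf c})\in\phi(I)$ matching the paper's Case 1/Case 2 analysis for the rules with double negation. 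Your route is more elementary and self-contained, and it makes surjectivity of $\phi$ onto the stable models explicit; the price is that in the minimality check you must handle by hand candidate submodels that drop ${\tt sat}$ atoms and original atoms simultaneously (showing that dropping only ${\tt sat}$ atoms is blocked by the first two rules' reducts, and that dropping an original atom projects to a proper submodel of $(\overline{\Pi_I})^I$), a bookkeeping step the splitting theorem absorbs automatically in the paper's version. Your flagged concern about the reduct of the doubly negated ${\tt sat}$ literal is the right place to be careful, and resolves exactly as you anticipate.
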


While the translation is simple and modular, there are a few problems with using this translation to compute $\lpmln$ using ASP solvers. First, the translation does not  necessarily yield a program that is acceptable in {\sc clingo} and requires a further translation. In particular, the first and the second rules of \eqref{lpmln2asp-rwd} may not be in the syntax of {\sc clingo}. (The third rule contains double negations, which are allowed in {\sc clingo} from version~4.)
Second, more importantly, when we translate non-ground $\lpmln$ rules into the input language of ASP solvers,  the first and the second rules of \eqref{lpmln2asp-rwd} may be unsafe, so {\sc clingo} cannot ground the program. 
An alternative translation in the next section avoids these problems by basing on the penalty-based concept of weights. 

\subsection{Turning $\lpmln$ into ASP: Penalty Way} \label{ssec:lpmln2asp-pnt}

Based on the reformulation of $\lpmln$ in Section~\ref{ssec:reformulation}, we introduce another translation that turns $\lpmln$ programs into ASP programs. The translation ensures that a safe $\lpmln$ program is always turned into a safe ASP program, and the resulting program  is readily acceptable as an input to {\sc clingo}.\footnote{An $\lpmln$ program $\Pi$ is {\em safe} if its unweighted program $\overline{\Pi}$ is safe as defined by~\citeN{calimeri12aspcore2}.}

We define the translation~${\sf lpmln2asp^{pnt}}(\Pi)$ by translating each (possibly non-ground) rule 
\[
   w_i:\ \ \ \  \j{Head}_i({\bf x}) \leftarrow \j{Body}_i({\bf x}) \  
\] 
in an $\lpmln$ program $\Pi$, where $i$ is the index of the rule and ${\bf x}$ is the list of global variables in the rule,  into ASP rules
\beq
\ba {rcl}
  {\tt unsat}(i, w_i, {\bf x}) & \ar & Body_i({\bf x}),\ {\tt not}\ Head_i({\bf x})\\
  Head_i({\bf x})  & \ar &  Body_i({\bf x}),\ {\tt not}\ {\tt unsat}(i, w_i, {\bf x})\\
           &:\sim & {\tt unsat}(i, w_i, {\bf x}). \ \ \ [w'_i@l, i, {\bf x}]
\ea
\eeq{lpmln2asp-pnt}
where (i) $w'_i = 1$ and $l=1$ if $w_i$ is $\alpha$; and (ii) $w'_i = w_i$ and $l=0$ otherwise.\footnote{In the case $\j{Head}_i$ is a disjunction $l_1; \dots, l_n$, expression ${\tt not}\ \j{Head}_i$ stands for ${\tt not}\ l_1, \dots, {\tt not}\ l_n$.}

Intuitively, the first rule of \eqref{lpmln2asp-pnt} makes atom ${\tt unsat}(i,w_i, {\bf x})$ true when the $i$-th rule in the original program is not satisfied. In that case, the second rule is not effective, and $w_i$ is imposed on the penalty of the stable model. On the other hand, if the $i$-th rule is satisfied, atom ${\tt unsat}(i,w_i,{\bf x})$ is false, the rule $\j{Head}_i\ar\j{Body}_i$ is effective, and the penalty is not imposed. 

The following theorem is an extension of Corollary~2 by~\citeN{lee17lpmln} to allow non-ground programs and to consider the correspondence between all stable models, not only the most probable ones.

\begin{thm}\label{thm:lpmln2asp-penalty}
For any $\lpmln$ program $\Pi$, there is a 1-1 correspondence $\phi$ between $\sm[\Pi]$ and the set of stable models of ${\sf lpmln2asp^{pnt}}(\Pi)$, 
where 
\[ 
  \phi(I)=I\cup \{{\tt unsat}(i, w_i, {\bf c}) \mid 
                   w_i:\j{Head}_i({\bf c}) \ar \j{Body}_i({\bf c})\ \text{in ${Gr}(\Pi)$}, 
                   I\not\models \j{Body}_i({\bf c})\rar\j{Head}_i({\bf c})\}.
\]
Furthermore,
\begin{equation}\label{eq:wgt-unsat}
   W^{\rm pnt}_{\Pi}(I)= exp \Bigg(-\sum_{{\tt unsat}(i, w_i, {\bf c}) \in \phi(I)} w_i\Bigg). 
\end{equation}
Also, $\phi$ is a 1-1 correspondence between the most probable stable models of $\Pi$ and the optimal stable models of ${\sf lpmln2asp^{pnt}}(\Pi)$.
\end{thm}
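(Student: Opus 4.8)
The plan is to reduce Theorem~\ref{thm:lpmln2asp-penalty} to the already-established reward-way correspondence of Theorem~\ref{thm:lpmln2asp-rwd} together with the penalty reformulation of Theorem~\ref{thm:lpmln-pnt}, rather than redoing the stable-model argument from scratch. First I would fix an $\lpmln$ program $\Pi$ and consider its ground instance $Gr(\Pi)$, since stable models, the $\mathrm{SM}$ operator, and the weight functions are all defined ground-wise and each ground instance inherits the weight of its non-ground parent; this lets me treat the non-ground claim as the ground claim applied to $Gr(\Pi)$, with the caveat that I must separately check the translation commutes with grounding (i.e.\ grounding ${\sf lpmln2asp^{pnt}}(\Pi)$ gives the same program as ${\sf lpmln2asp^{pnt}}(Gr(\Pi))$ modulo the safety remark in the footnote), which is immediate from the fact that each ${\tt unsat}(i,w_i,{\bf x})$ atom carries exactly the global variables of rule $i$.

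Next, for the ground case, I would establish the stable-model correspondence directly by the standard splitting/completion-style argument: given $I \in \sm[\Pi]$, define $\phi(I)$ as in the statement by adding ${\tt unsat}(i,w_i,{\bf c})$ precisely for the ground rules violated by $I$, and verify that $\phi(I)$ is a stable model of ${\sf lpmln2asp^{pnt}}(Gr(\Pi))$. The key observations are: (a) the choice rule pair --- the first rule deriving ${\tt unsat}$ from $Body_i \wedge \neg Head_i$, and the second rule deriving $Head_i$ from $Body_i \wedge \neg {\tt unsat}(i,\dots)$ --- behaves, for each $i$, exactly like the original rule $w_i: Head_i \leftarrow Body_i$ whenever $I$ satisfies that rule, and like a fact ${\tt unsat}(i,w_i,{\bf c})$ with no constraint on $Head_i$ when $I$ violates it; (b) the ${\tt unsat}$ atoms occur in no other rule heads and only in the bodies shown, so the reduct of the translated program with respect to $\phi(I)$ decomposes cleanly, and its minimal model restricted to the original vocabulary is exactly the minimal model of $\overline{\Pi_I}$'s reduct, namely $I$. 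Conversely, any stable model $J$ of the translated program must, by minimality, contain ${\tt unsat}(i,w_i,{\bf c})$ iff $Body_i$ holds and $Head_i$ fails in $J$, forcing $J = \phi(J\!\restriction_{\text{orig}})$ and $J\!\restriction_{\text{orig}} \in \sm[\Pi]$. This is the part that most closely mirrors Corollary~2 of \citeN{lee17lpmln}, so I would cite that and only spell out the non-ground bookkeeping.

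With the bijection $\phi$ in hand, the weight identity~\eqref{eq:wgt-unsat} is essentially by construction: $\{i : {\tt unsat}(i,w_i,{\bf c}) \in \phi(I)\}$ is exactly the set of ground rules $R$ with $I \not\models R$, so $\sum_{{\tt unsat}(\dots)\in\phi(I)} w_i = \sum_{w:R \in \Pi,\, I\not\models R} w$, and $W^{\rm pnt}_\Pi(I) = \exp(-\sum\dots)$ matches the definition in Section~\ref{ssec:reformulation}. For the ``most probable $\leftrightarrow$ optimal'' clause, I would argue that a soft rule $i$ contributes its weight $w_i$ at level $0$ of the weak-constraint penalty exactly when ${\tt unsat}(i,w_i,{\bf c})$ is true, and a hard rule contributes $1$ at level $1$ exactly then; hence for two stable models, one dominates the other in the weak-constraint sense iff it violates strictly fewer hard rules, or the same hard rules but has smaller total soft penalty --- which is precisely the ordering induced by $\lim_{\alpha\to\infty} W^{\rm pnt}_\Pi(\cdot)$, and by Theorem~\ref{thm:lpmln-pnt} the same as the ordering by $P_\Pi$. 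So $\phi$ restricts to a bijection between most probable stable models and optimal stable models.

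**Main obstacle.** The routine stable-model verification is not where the difficulty lies; the delicate point is the treatment of disjunctive heads flagged in the footnote (where ${\tt not}\,Head_i$ abbreviates $\bigwedge_j {\tt not}\, l_j$) and, more subtly, making sure the double-negation-free encoding of~\eqref{lpmln2asp-pnt} still yields the right reduct when $Head_i$ is disjunctive --- one has to check that ${\tt unsat}(i,w_i,{\bf c})$ being false genuinely forces the disjunction $Head_i$ to hold under the body, without introducing spurious stable models that pick a ``wrong'' disjunct. The other place requiring care is the safety/grounding commutation: one must confirm that {\sc clingo}'s grounding of the translated non-ground program produces exactly $Gr$ of the translation, so that the appeal to the ground case is legitimate; this is true because every variable in each new rule already appears in $Body_i$ (for the first two rules) or in the ${\tt unsat}$ atom (for the weak constraint), but it is the step I would state explicitly rather than leave implicit.
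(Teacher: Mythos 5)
Your plan is essentially correct and follows the same route as the paper's own treatment: a direct splitting/reduct-based 1--1 correspondence between $\sm[\Pi]$ and the stable models of the ground translation (the paper carries this argument out explicitly for the analogous reward translation of Theorem~\ref{thm:lpmln2asp-rwd} and presents the penalty case as the extension of Corollary~2 of Lee and Yang 2017), the weight identity \eqref{eq:wgt-unsat} read off directly from the definition of $\phi$, and the level-$1$/level-$0$ lexicographic penalty comparison matched against the $\alpha\to\infty$ ordering via Theorem~\ref{thm:lpmln-pnt}. The only caveat is that your opening framing of ``reducing to Theorem~\ref{thm:lpmln2asp-rwd}'' is not what you actually do (nor could it work literally, since the two translations have different stable models over different auxiliary atoms); the direct argument you then sketch is the right one and matches the paper.
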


Theorem~\ref{thm:lpmln2asp-penalty}, in conjunction with Theorem~\ref{thm:lpmln-pnt}, provides a way to compute the probability of a stable model of an $\lpmln$ program by examining the ${\tt unsat}$ atoms satisfied by the corresponding stable model of the translated ASP program.

\subsection{System {\sc lpmln2asp}} \label{ssec:lpmln2asp-system}

\begin{figure}
    \includegraphics[width=0.85\textwidth,height=3.4cm]{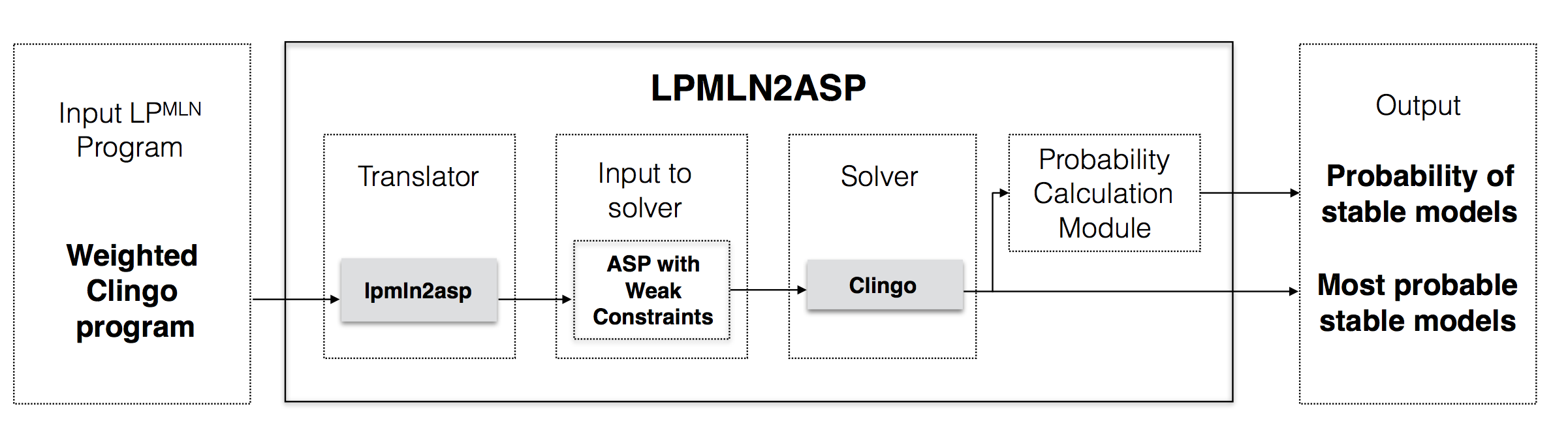}
    \caption{Architecture of System {\sc lpmln2asp}}
    \label{fig:lpmln2asp-sys}
\end{figure}

System {\sc lpmln2asp} is an implementation of $\lpmln$ based on the result in Section~\ref{ssec:lpmln2asp-pnt} using {\sc clingo} v4.5.
It can be used for computing the probabilities of stable models, marginal/conditional probability of a query, as well as the most probable stable models. 

In the input language of {\sc lpmln2asp}, a soft rule is written in the form
\beq
   w_i\ \ \j{Head}_i\ar\j{Body}_i
\eeq{lpmln2asp-rule}
where $w_i$ is a real number in decimal notation, and $\j{Head}_i\ar\j{Body}_i$ is a {\sc clingo} rule. A hard rule is written without weights and is identical to a {\sc clingo} rule. 
For instance, the ``Bird'' example from the paper by~\citeN{lee16weighted} can be represented in the input language of {\sc lpmln2asp} as follows. The first three rules represent definite knowledge while the last two rules represent uncertain knowledge with different confidence levels. 
\begin{lstlisting}
  % bird.lpmln
  bird(X) :- residentbird(X).
  bird(X) :- migratorybird(X).
  :- residentbird(X), migratorybird(X).
  2 residentbird(jo).
  1 migratorybird(jo).
\end{lstlisting}

The basic command line syntax of executing {\sc lpmln2asp} is
\begin{lstlisting}
   lpmln2asp -i <input file> [-r <output file>] [-e <evidence file>]  
         [-q <query predicates>] [-hr] [-all] [-clingo "<clingo options>"] 
\end{lstlisting}
which follows the {\sc alchemy} command line syntax.

The mode of computation is determined by the options provided to {\sc lpmln2asp}. 
By default, the system finds a most probable stable model of ${\sf lpmln2asp}^{\rm pnt}(\Pi)$ (MAP estimate) by leveraging {\sc clingo}'s built-in optimization method for weak constraints.  

For computing marginal probability, {\sc lpmln2asp} utilizes {\sc clingo}'s interface with Python. When {\sc clingo} enumerates each stable model of ${\sf lpmln2asp}^{\rm pnt}(\Pi)$, the computation is interrupted by the {\em probability computation module}, a Python program which records the stable model as well as its penalty specified in the ${\tt unsat}$ atoms true in the stable model. Once all the stable models are generated, the control returns to the module, which sums up the recorded penalties to compute the normalization constant as well as the probability of each stable model. The probabilities of query atoms (specified by the option {\tt -q}) are also calculated by adding the probabilities of the stable models that contain the query atoms. For instance, the probability of a query atom {\tt residentbird(jo)} is $\Sigma_{I\models {\tt residentbird(jo)}} P(I)$.
The option {\tt -all} instructs the system to display all stable models and their probabilities.

For conditional probability, the evidence file {\tt <evidence file>} is specified by the option {\tt -e}. The file may contain any {\sc clingo} rules, but usually they are constraints, i.e., rules with the empty head. The main difference from the marginal probability computation is that {\sc clingo} computes ${\sf lpmln2asp}^{\rm pnt}(\Pi) \cup$ {\tt <evidence file>} instead of ${\sf lpmln2asp}^{\rm pnt}(\Pi)$.

Below we illustrate how to use the system for various inferences.

\medskip\noindent
{\bf MAP (Maximum A Posteriori) inference:}\ \ 
The command line to use is 
\begin{lstlisting}
    lpmln2asp -i <input file> 
\end{lstlisting}
By default, {\sc lpmln2asp} computes MAP inference. For example, 
\lstinline{lpmln2asp -i bird.lpmln} returns 
\begin{lstlisting}
residentbird(jo) bird(jo) unsat(5,"1.000000")
Optimization: 1000
OPTIMUM FOUND
\end{lstlisting}

\smallskip\noindent
{\bf Marginal probability of all stable models:}\ \ 
The command line to use is 
\begin{lstlisting}
    lpmln2asp -i <input file> -all 
\end{lstlisting}
For example, \lstinline{lpmln2asp -i bird.lpmln -all}  outputs 
\begin{lstlisting}
Answer: 1
residentbird(jo) bird(jo) 
unsat(5,"1.000000")
Optimization: 1000
Answer: 2
unsat(4,"2.000000") unsat(5,"1.000000")
Optimization: 3000
Answer: 3
unsat(4,"2.000000") bird(jo) 
migratorybird(jo)
Optimization: 2000

Probability of Answer 1 : 0.665240955775
Probability of Answer 2 : 0.0900305731704
Probability of Answer 3 : 0.244728471055
\end{lstlisting}

\noindent
{\bf Marginal probability of query atoms:}\ \ 
The command line to use is 
\begin{lstlisting}
    lpmln2asp -i <input file> -q <query predicates>
\end{lstlisting}  
This mode calculates the marginal probability of the atoms whose predicates are specified by {\tt -q} option. For example, 
\lstinline{lpmln2asp -i birds.lp -q residentbird} outputs
\begin{lstlisting}
residentbird(jo) 0.665240955775
\end{lstlisting}

\smallskip\noindent
{\bf Conditional probability of query given evidence:}\ \  
The command line to use is 
\begin{lstlisting}
    lpmln2asp -i <input file> -q <query predicates> -e <evidence file>
\end{lstlisting}
This mode computes the conditional probability of a query given the evidence specified in the \lstinline{<evidence file>}. 
For example,
\begin{lstlisting}
  lpmln2asp -i birds.lp -q residentbird -e evid.db
\end{lstlisting}
where \lstinline{evid.db} contains 
\begin{lstlisting}
   :- not bird(jo).
\end{lstlisting}
outputs the conditional probability $P(residentbird(X) \mid bird(jo))$:
\begin{lstlisting}
   residentbird(jo) 0.73105857863
\end{lstlisting}


\medskip\noindent
{\bf Debugging ASP Programs:}\ \ 
The command line to use is 
\begin{lstlisting}
   lpmln2asp -i <input file> -hr -all
\end{lstlisting}
By default, {\sc lpmln2asp} does not translate hard rules and pass them to {\sc clingo} as is. The option {\tt -hr} instructs the system to translate hard rules as well.
 According to Proposition~2 by \citeN{lee16weighted}, as long as the $\lpmln$ program has a probabilistic stable model that satisfies all hard rules, the simpler translation that does not translate hard rules gives the same result as the full translation and is more computationally efficient. Since in many cases hard rules represent definite knowledge that should not be violated, this is desirable. 

On the other hand, translating hard rules could be relevant in some other cases, such as debugging an answer set program by finding which rules cause inconsistency. 
For example,  consider a {\sc clingo} input program {\tt bird.lp}, that is similar to {\tt bird.lpmln} but drops the weights in the last two rules. {\sc clingo} finds no stable models for this program. However, if we invoke {\sc lpmln2asp} on the same program as
\begin{lstlisting}
   lpmln2asp -i bird.lp -hr 
\end{lstlisting} 
the output of {\sc lpmln2asp} shows three probabilistic stable models, each of which shows a way to resolve the inconsistency by ignoring the minimal number of the rules. For instance, one of them is \{{\tt bird(jo)}, {\tt residentbird(jo)}\}, which disregards the last rule. 
The other two are similar.

Note that the probability computation involves enumerating all stable models so that it can be much more computationally expensive than the default MAP  inference. On the other hand, the computation is exact, so compared to an approximate inference, the ``gold standard'' result is easy to understand. Also, the conditional probability is more effectively computed than the marginal probability because {\sc clingo} effectively prunes many answer sets that do not satisfy the constraints specified in the evidence file. 

\subsection{Computing MLN with {\sc lpmln2asp}}\label{ssec:mln2asp}

A typical example in the MLN literature is a social network domain that describes how smokers influence other people, which can be represented in $\lpmln$ as follows. We assume three people $alice$, $bob$, and $carol$, and assume that $alice$ is a smoker, $alice$ influences $bob$, $bob$ influences $carol$, and nothing else is known.   
\beq
\ba l
  w:\ \   smoke(x) \wedge influence(x, y) \rightarrow smoke(y) \\
  \alpha:  smoke(alice) \hspace{1cm} \alpha: influence(alice, bob) \hspace{1cm}
  \alpha: influence(bob, carol).
 \ea
\eeq{smoke}
($w$ is a positive number.)
One may expect $bob$ is less likely a smoker than $alice$, and $carol$ is less likely a smoker than $bob$.

Indeed, the program above defines the following distribution (we omit the $\j{influence}$ relation, which has a fixed interpretation.)
\begin{center}
\begin{tabular}{ c c } 
 \hline
 \bf{Possible World} & \bf{Weight}\\ 
 \hline
 $\{smoke(alice), \neg smoke(bob), \neg smoke(carol)\}$ & $k\cdot e^{8w}$ \\ 
$\{smoke(alice), smoke(bob),\neg smoke(carol)\}$ & $k\cdot e^{8w}$ \\ 
$\{smoke(bob), \neg smoke(alice), smoke(carol)\}$ & $0$  \\ 
$\{smoke(alice), smoke(bob), smoke(carol)\}$ & $k\cdot e^{9w}$  \\ 
 \hline
\end{tabular}
\end{center}
where $k=e^{3\alpha}$. 
The normalization constant is the sum of all the weights: $k\cdot e^{9w}+2k\cdot e^{8w}$.
This means
 $P(smoke(alice)) = 1$
and 
\[
{\small
P(smoke(bob)) =\lim_{\alpha\to\infty} \frac{k\cdot e^{8w}+k\cdot e^{9w}}{k\cdot e^{9w}+2k\cdot e^{8w}} > 
    P(smoke(carol))= \lim_{\alpha\to\infty}\frac{k\cdot e^{9w}}{k\cdot e^{9w}+2k\cdot e^{8w}} .
}
\]

The result can be verified by {\sc lpmln2asp}.
For $w=1$, the input program {\tt smoke.lpmln} is
\begin{lstlisting}
   1 smoke(Y) :- smoke(X), influence(X, Y).
   smoke(alice).    influence(alice, bob).    influence(bob, carol).
\end{lstlisting}

Executing
{\tt    lpmln2asp -i smoke.lpmln -q smoke}
outputs
\begin{lstlisting}
   smoke(alice) 1.00000000000000
   smoke(bob) 0.788058442382915
   smoke(carol) 0.576116884765829
\end{lstlisting}
as expected. 

On the other hand, if \eqref{smoke} is understood under the MLN semantics (assuming {\tt influence} relation is fixed as before),
similar to above, one can compute 
\[
  P(smoke(bob)) = \frac{e^{8w}+e^{9w}}{3e^{8w}+e^{9w}} = P(smoke(carol)).
\]
In other words, the degraded probability along the transitive relation does not hold under the MLN semantics. This is related to the fact that Markov logic cannot express the concept of transitive closure correctly as it inherits the FOL semantics. 

According to Theorem 2 in the paper by \citeN{lee16weighted}, MLN can be easily embedded in $\lpmln$ by adding a choice rule for each atom with an arbitrary weight, similar to the way propositional logic can be embedded in ASP using choice rules. 
Consequently, it is possible to use system {\sc lpmln2asp} to compute MLN, which is essentially using an ASP solver to compute MLN.

Let {\tt smoke.mln} be the resulting program. Executing
{\tt lpmln2asp -i smoke.mln -q smoke}
outputs
\begin{lstlisting}
   smoke(alice) 1.0    smoke(bob) 0.650244590946    smoke(carol) 0.650244590946
\end{lstlisting}
which agrees with the computation above.

\section{Turning Tight $\lpmln$ into MLN} \label{sec:lpmln2mln}

\subsection{Translation}\label{ssec:lpmln2mln}

In the implementation of $\lpmln$ using MLN solvers, we limit attention to non-disjunctive logic programs that are tight.
Extending Theorem~3 by \citeN{lee16weighted} to non-ground programs, $\lpmln$ programs (possibly containing variables) can be turned into MLN instances by first rewriting each rule into Clark normal form 
\[
   w:\ \ p({\bf x}) \ar \j{Body}
\]
using equality in the body, where ${\bf x}$ is a list of new distinct variables unique to each predicate $p$, and then adding the completion formulas 
\beq
  \alpha:\ \  
     p({\bf x}) \rar\bigvee_{w:\ \ p({\bf x})\ar \mi{Body}\ \ \in\ \  \Pi} \j{Body} 
\eeq{completion}
for each atom $p({\bf x})$. 
In fact, since the built-in algorithm in {\sc alchemy} for clausifying the completion formulas may yield an exponential blow-up,  {\sc lpmln2mln} implements an equivalent rewriting known as Tseytin transformation,\footnote{%
\url{https://en.wikipedia.org/wiki/Tseytin_transformation}}
which introduces an auxiliary predicate for each disjunctive term in \eqref{completion}. 
The resulting MLN instance (possibly containing variables) is fed into {\sc alchemy}, which grounds the MLN and performs probabilistic inference on the ground network.\footnote{Without introducing auxiliary atoms in the process of clausification, most examples cannot be run on {\sc alchemy} because the CNF conversion method implemented in {\sc alchemy} is naive.}

For any MLN $\ML$, let $\ML^{hard}$ and $\ML^{soft}$ denote the set of hard formulas and soft formulas in $\ML$, respectively. For any set $\ML$ of weighted formulas, let $\overline{\ML}$ be the formulas obtained from $\ML$ by dropping the weights. The following proposition justifies the equivalent rewriting using auxiliary atoms. 
\begin{prop}\label{prop:mln-aux}\optional{prop:mln-aux}
For any MLN $\ML$ of signature $\sigma$, let $F({\bf x})$ be a subformula of some formula in $\ML$ where ${\bf x}$ is the list of all free variables of $F({\bf x})$, and let $\ML^F_{Aux}$ be the MLN program obtained from $\ML$ by replacing $F({\bf x})$ with a new predicate $Aux({\bf x})$ and adding the formula
\[
  \alpha\ \ :\ \  Aux({\bf x}) \leftrightarrow F({\bf x}).
\]
For any interpretation $I$ of $\ML$, let $I_{Aux}$ be the extension of $I$ of signature $\sigma\cup \{Aux\}$ defined by $I_{Aux}(Aux({\bf c}))=(F({\bf c}))^I$ for every list ${\bf c}$ of elements in the Herbrand universe.
When $\overline{\ML^{hard}}$ has at least one model, we have 
\[
    P_{\ML}(I) = P_{\ML^F_{Aux}}(I_{Aux}).
\]
\end{prop}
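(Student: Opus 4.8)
The plan is to reduce everything to ground MLNs, set up a one-to-one, weight-rescaling correspondence between interpretations of the two signatures, and then take the limit $\alpha \to \infty$.

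First I would reduce to the ground case. Since $P_{\ML}(I) = P_{Gr(\ML)}(I)$ and $P_{\ML^F_{Aux}}(I_{Aux}) = P_{Gr(\ML^F_{Aux})}(I_{Aux})$, it suffices to prove the identity for ground MLNs, and the only point that needs checking is that the subformula replacement commutes with grounding. Because $\mathbf{x}$ is exactly the list of free variables of $F(\mathbf{x})$ and $Aux$ has arity $|\mathbf{x}|$, $Gr(\ML^F_{Aux})$ is obtained from $Gr(\ML)$ by (i) replacing, in every formula, each ground occurrence of the subformula $F(\mathbf{c})$ (for $\mathbf{c}$ a tuple of Herbrand terms of the appropriate length) by $Aux(\mathbf{c})$, and (ii) adding the $N$ hard formulas $\alpha: Aux(\mathbf{c}) \leftrightarrow F(\mathbf{c})$, where $N$ is the (finite, by the finite Herbrand universe assumption) number of such tuples. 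So from now on $\ML$ may be assumed ground.

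Next I would observe that an interpretation $J$ of $\sigma \cup \{Aux\}$ equals $I'_{Aux}$ for some $I'$ of $\sigma$ — uniquely, $I'$ being the restriction of $J$ — precisely when $J$ satisfies all $N$ bi-implications; call the other interpretations \emph{deviant}. For $J = I_{Aux}$: since $I_{Aux}$ agrees with $I$ on every $\sigma$-atom and satisfies $Aux(\mathbf{c})$ iff $I \models F(\mathbf{c})$, replacing $F(\mathbf{c})$ by $Aux(\mathbf{c})$ inside a formula preserves its truth value; hence $I_{Aux}$ satisfies exactly the images under the replacement of the formulas satisfied by $I$, together with all $N$ bi-implications, which gives $W_{\ML^F_{Aux}}(I_{Aux}) = e^{\alpha N}\, W_{\ML}(I)$. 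The same argument applied only to hard formulas shows that $I_{Aux}$ satisfies all hard ground formulas of $\ML^F_{Aux}$ iff $I$ is a model of $\overline{\ML^{hard}}$.

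The hard part will be the limit $\alpha \to \infty$, i.e. showing that deviant interpretations are asymptotically negligible. Fix a model $I_0$ of $\overline{\ML^{hard}}$ (it exists by hypothesis); with $H$ the number of hard ground formulas of $\ML$ and $M = \sum_{w:G \text{ soft in } Gr(\ML)} |w|$, the previous paragraph gives $W_{\ML^F_{Aux}}(I_{0,Aux}) \ge e^{-M} e^{(H+N)\alpha}$, whereas any deviant $J$ violates at least one bi-implication and hence satisfies at most $H+N-1$ hard ground formulas, so $W_{\ML^F_{Aux}}(J) \le e^{M} e^{(H+N-1)\alpha}$. There being only finitely many deviant interpretations, their combined weight is negligible relative to $\sum_{I'} W_{\ML^F_{Aux}}(I'_{Aux})$ as $\alpha \to \infty$, and therefore
\[ P_{\ML^F_{Aux}}(I_{Aux}) = \lim_{\alpha\to\infty} \frac{W_{\ML^F_{Aux}}(I_{Aux})}{\sum_{I'} W_{\ML^F_{Aux}}(I'_{Aux})} = \lim_{\alpha\to\infty} \frac{e^{\alpha N}\, W_{\ML}(I)}{\sum_{I'} e^{\alpha N}\, W_{\ML}(I')} = P_{\ML}(I), \]
the first equality discarding the negligible deviant terms from the definition of $P_{\ML^F_{Aux}}$ and the second cancelling $e^{\alpha N}$. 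This limit step is exactly where the hypothesis that $\overline{\ML^{hard}}$ be satisfiable is indispensable: it makes some $I_{0,Aux}$ attain the maximal hard exponent $H+N$, strictly above the exponent $\le H+N-1$ of every deviant interpretation, so the deviant share decays like $e^{-\alpha}$; without it the normalization could be dominated by deviant interpretations and the equality can break down.
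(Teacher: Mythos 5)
Your proof is correct, and it takes a genuinely different route from the paper's, though the underlying idea (violating a hard formula costs a factor $e^{-\alpha}$, so such interpretations vanish in the limit) is the same. The paper first proves a standalone lemma in the appendix: when $\overline{\ML^{hard}}$ is satisfiable, $P_{\ML}(I)$ equals the normalized \emph{soft-only} weight, with the normalization running only over models of $\overline{\ML^{hard}}$ (and $P_{\ML}(I)=0$ otherwise). Applying that lemma to both $\ML$ and $\ML^F_{Aux}$ reduces the proposition to a term-by-term identification of the resulting $\alpha$-free expressions, using exactly the two facts you also use --- the replacement preserves truth values under $I_{Aux}$, and the models of $\overline{(\ML^F_{Aux})^{hard}}$ are precisely the extensions $J_{Aux}$ of models $J$ of $\overline{\ML^{hard}}$ --- with a separate trivial case when $I\not\models\overline{\ML^{hard}}$. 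You instead keep $\alpha$ in play: the exact rescaling $W_{\ML^F_{Aux}}(I_{Aux})=e^{N\alpha}W_{\ML}(I)$ holds for every $I$ (so no case split is needed), and one concentration estimate ($H+N$ versus at most $H+N-1$ hard formulas satisfied) shows the deviant interpretations contribute a vanishing fraction of the normalization, after which $e^{N\alpha}$ cancels and the limit defining $P_{\ML}(I)$ appears directly. The paper's route buys a reusable lemma and an $\alpha$-free reformulation of MLN probability (it does, however, tacitly need the lemma's hypothesis for $\ML^F_{Aux}$ as well, which your $I_{0,Aux}$ argument makes explicit); your route buys a shorter self-contained argument, an explicit treatment of grounding, and a sharper localization of where the satisfiability of $\overline{\ML^{hard}}$ is indispensable.
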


\subsection{System {\sc lpmln2mln}}

System {\sc lpmln2asp} is an implementation of $\lpmln$ based on the result in Section~\ref{ssec:lpmln2mln} using {\sc alchemy} (v2.0), {\sc tuffy} (v0.3) and {\sc rockit} (v0.5).

\begin{figure}
    \includegraphics[width=0.85\textwidth,height=3.5cm]{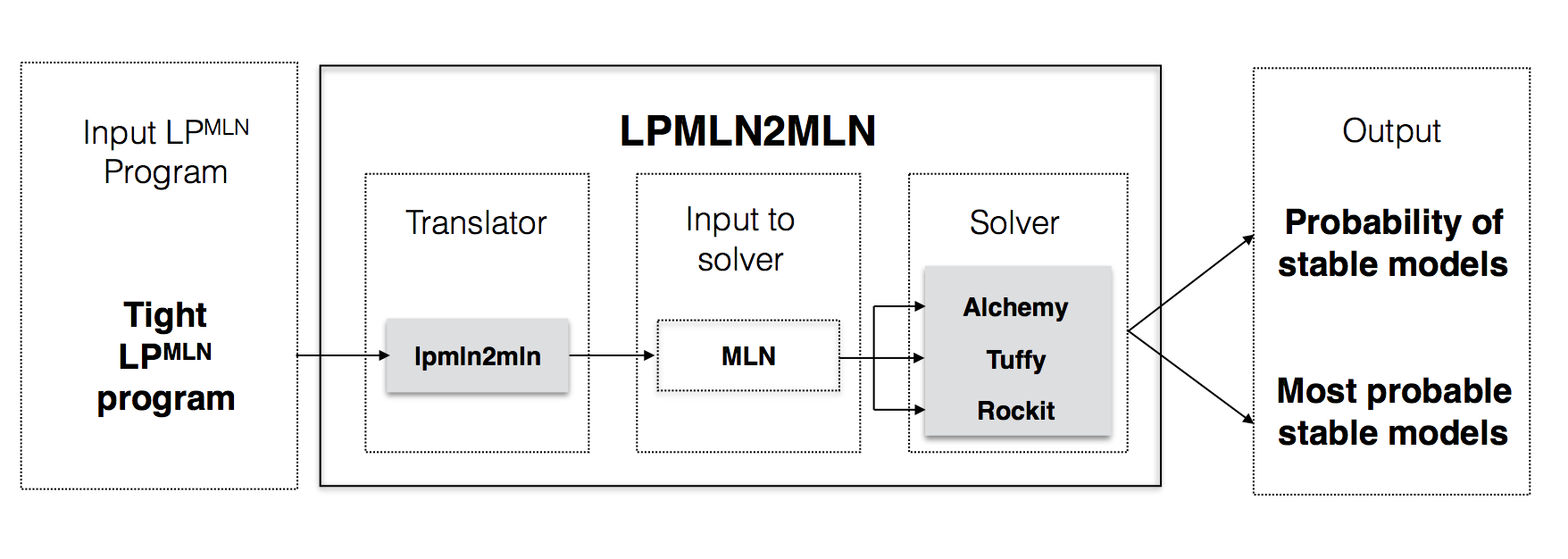}
    \caption{Architecture of System {\sc lpmln2mln}}
    \label{fig:lpmln2mln-sys}
\end{figure}

The basic command line syntax of executing {\sc lpmln2mln} is 
\begin{lstlisting}
   lpmln2mln -i <input file> -r <output file> -q <query predicates> 
        [-e <evidence file>] 
        [-tuffy| -rockit| -alchemy] [-mln "<options for mln solvers>"] 
\end{lstlisting}
which is similar to the command of executing {\sc lpmln2asp}.

The syntax of the input language of {\sc lpmln2mln} follows that of {\sc alchemy}, except that it uses a rule form.
For example, consider again Example 1 in the paper by  \citeN{lee16weighted}. In the input language of {\sc lpmln2mln}, it is encoded as 

\begin{multicols}{2}
\begin{lstlisting}
   entity={Jo}

   Bird(entity)
   MigratoryBird(entity)
   ResidentBird(entity)

   Bird(x) <= ResidentBird(x).
   Bird(x) <= MigratoryBird(x).
   <= ResidentBird(x) ^ MigratoryBird(x).

   2 ResidentBird(Jo)
   1 MigratoryBird(Jo)
\end{lstlisting}
\end{multicols}
\vspace{-5mm}

Executing
\begin{lstlisting}
  lpmln2mln -i bird.lpmln -r out -q Bird,ResidentBird,MigratoryBird
\end{lstlisting}
gives
\begin{lstlisting}
   Bird(Jo) 0.90296   ResidentBird(Jo) 0.667983   MigratoryBird(Jo) 0.235026
\end{lstlisting}
(When no MLN solver is specified in the command line, {\sc alchemy} is called by default.)

\begin{figure}[bp]
	\centering
		\includegraphics[width=1\textwidth]{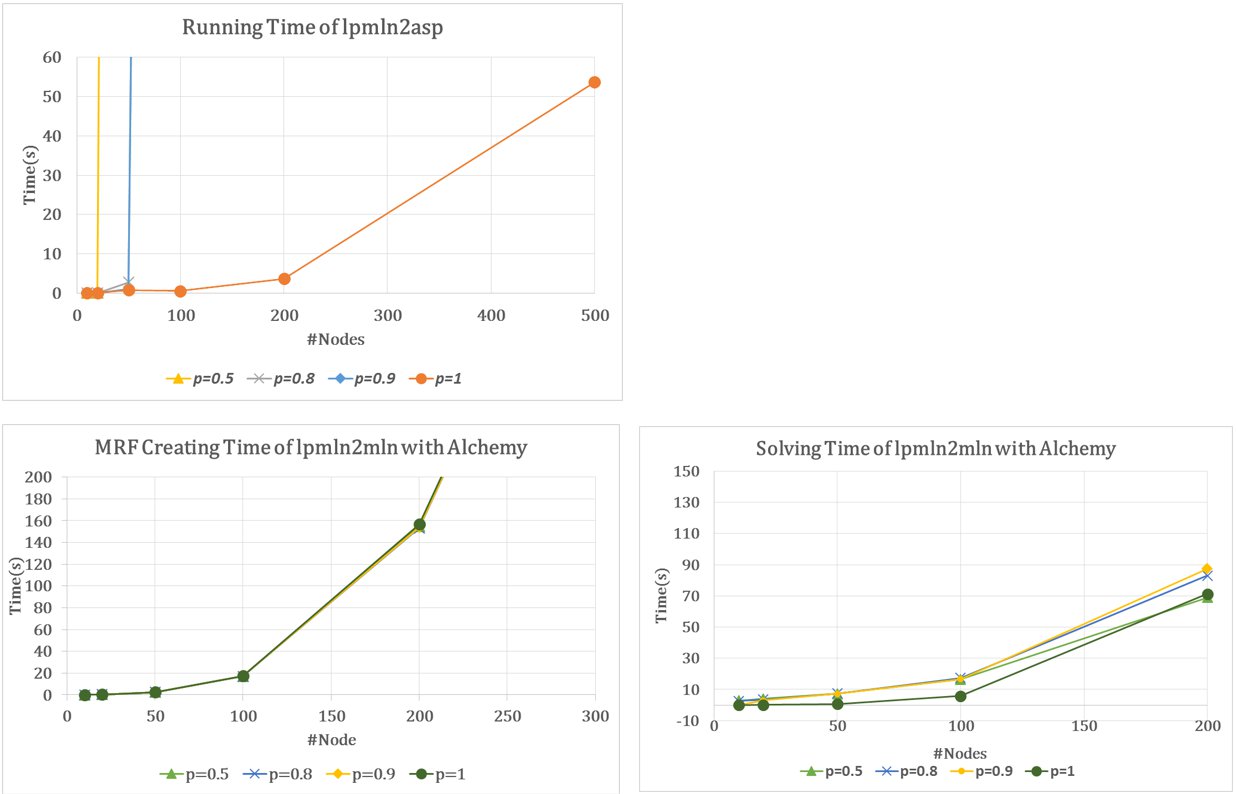}
	\caption{Running Statistics on Finding Relaxed Clique}
	\label{fig:performance}
\end{figure}

\section{Comparison Between Two {\sc lpmln} Implementations} \label{sec:example}

Both {\sc lpmln2asp} and {\sc lpmln2mln} can compute conditional/marginal probability, as well as finding the most probable stable models (MAP estimates). 
The implementations use ASP and MLN solvers as blackboxes, so their performance depends on the underlying solvers. 
Although ASP solvers do not have a built-in concept of probabilistic reasoning, it is interesting to note how the optimal answer set finding is related to MAP estimates in probabilistic reasoning. Grounding in ASP solvers is much more efficient than that in MLN solvers for the examples that we tested, but they have different characters. While  grounding methods implemented in MLN solvers are not highly optimized, they do not ground the whole network; rather an essential part of a Markov network can be constructed from Markov blankets relevant to the query.
Unlike {\sc lpmln2asp}, system {\sc lpmln2mln} utilizes approximate sampling-based inference methods in underlying MLN solvers. Consequently, its solving is more scalable  but gives less accurate results. Its input program is restricted to tight $\lpmln$ programs and does not support advanced ASP constructs, such as aggregates.
When the domain is small, our experience is that it is much more convenient to work with {\sc lpmln2asp} because it supports many useful ASP constructs and its exact computation yields outputs that are easier to understand. Once we make sure the program is correct and we do not need advanced ASP constructs nor recursive definitions, we may use {\sc lpmln2mln} for more scalable inference.

We report the running time statistics for both {\sc lpmln2asp} and {\sc lpmln2mln} on the example of finding a maximal ``relaxed clique'' in a graph, {where the goal is to select as many nodes as possible while a penalty is assigned for each pair of disconnected nodes.  
The penalty assigned to disconnected nodes and the reward given to each node included in the subgraph define how much ``relaxed'' the clique is.}  

The {\sc lpmln2asp} encoding of the relaxed clique example is
\begin{lstlisting}
{in(X)} :- node(X).
disconnected(X, Y) :- in(X), in(Y), not edge(X, Y).
5  :- not in(X), node(X).
5  :- disconnected(X, Y).
\end{lstlisting}

The {\sc lpmln2mln} encoding of the relaxed clique example is
\begin{lstlisting}
{In(x)} <= Node(x).
Disconnected(x, y) <= In(x) ^ In(y) ^ !Edge(x, y).
5  <= !In(x) ^ Node(x)
5  <= Disconnected(x, y)
\end{lstlisting}

We use a Python script to generate random graphs with each edge generated with a fixed probability $p$. We experiment with $p=0.5,  0.8,  0.9,  1$ and different numbers of nodes. 
For each problem instance, we perform MAP inference to find a maximal relaxed clique with both {\sc lpmln2asp} and {\sc lpmln2mln}. The timeout is 20 minutes.
The experiments are performed on a machine powered by 4 Intel(R) Core(TM) i5-2400 CPU with OS Ubuntu 14.04.5 LTS and 8G memory.

Figure \ref{fig:performance} shows running statistics of utilizing different underlying solvers.
For {\sc lpmln2asp}, grounding finishes almost instantly for all problem instances that we tested. We plot how solving times vary according to the number of nodes for different edge generation probabilities (top left graph). 
Roughly, solving time increases as the number of nodes increases. However, there is no clear correlation between solving time and the edge probability (i.e., the density of the graph). For $p=0.5$, the {\sc lpmln2asp} system first times out when $\#Nodes = 50$, while for both $p=0.8$ and $p=0.9$, it first times out when $\#Node= 100$. On the other hand, when $\#Node=20$, solving time roughly increases as the edge probability increases except for $p=0.5$. The running time is sensitive to particular problem instances, due to the exact optimization algorithm CDNL-OPT \cite{gebser11multi} used by {\sc clingo}, which only terminates when a true optimal solution is found. The non-deterministic nature of CDNL-OPT also brings randomness on the path through which an optimal solution is found, which makes the running time differ even among similar-sized problem instances, while in general, as the size of the graph increases, the search space gets larger, thus the solving time increases. 

For {\sc lpmln2mln} with {\sc alchemy} (bottom left and bottom right), grounding (MRF creating time) becomes the bottleneck. It increases much faster than solving time, and times out first when $\#Nodes=500$. Again, the running time increases as the number of nodes increases. On the other hand, unlike {\sc lpmln2asp}, {\sc alchemy} uses MaxWalkSAT for MAP inference, which allows a suboptimal solution to be returned. The approximate nature of the method allows relatively consistent running times for different problem instances, as long as parameters such as the maximum number of iterations/tries are fixed among all experiments. The running times are not also much affected by the edge probability. 

In general, {\sc lpmln2mln} can be more scalable via parameter setting, while {\sc lpmln2asp} grants better solution quality. {\sc lpmln2mln} with {\sc tuffy} shows a similar behavior as {\sc lpmln2mln} with {\sc alchemy}.

\vspace{-3.2mm}
\section{Using $\lpmln$ Implementations to Compute Other Languages}\label{sec:other}

\vspace{-1mm}
\subsection{Computing ProbLog} 

ProbLog \cite{deraedt07problog} can be viewed as a special case of the $\lpmln$ language \cite{lee16weighted}, in which soft rules are atomic facts only.
The precise relation between the semantics of the two languages is stated by \citeN{lee16weighted}.  System {\sc problog2} implements a native inference and learning algorithm which converts probabilistic inference problems into weighted model counting problems and then solves with knowledge compilation methods \cite{fierens13inference}.
We compare the performance of {\sc lpmln2asp} with that of {\sc problog2} on ProbLog input programs. We encode the problem of reachability in a probabilistic graph in both languages, and perform MAP inference (``given that there is a path between two nodes, what is the most likely graph?'') as well as marginal probability computation (``given two particular nodes, what is the probability that there exists a path between them?''). We use a Python script to generate edges with probabilities randomly assigned. For the probabilistic facts $p :: {\tt edge}(n_1,n_2)$ (\mbox{$0<p<1$}) in {\sc problog2}, we write $ln(p/(1-p)): {\tt edge}(n_1,n_2)$ for {\sc lpmln2asp},
which makes the probability of the edge being true to be $p$ and being false to be $1-p$. 

The path relation is defined in the input language of {\sc lpmln2asp} as
\begin{lstlisting}
path(X,Y) :- edge(X,Y).
path(X,Y) :- path(X,Z), path(Z, Y), Y != Z.
\end{lstlisting}
and in the input language of {\sc problog2} as 
\begin{lstlisting}
path(X,Y) :- edge(X,Y).
path(X,Y) :- path(X,Z), path(Z,Y), Y \== Z.
\end{lstlisting}

\begin{figure}[t]
	\centering
		\includegraphics[scale=0.29]{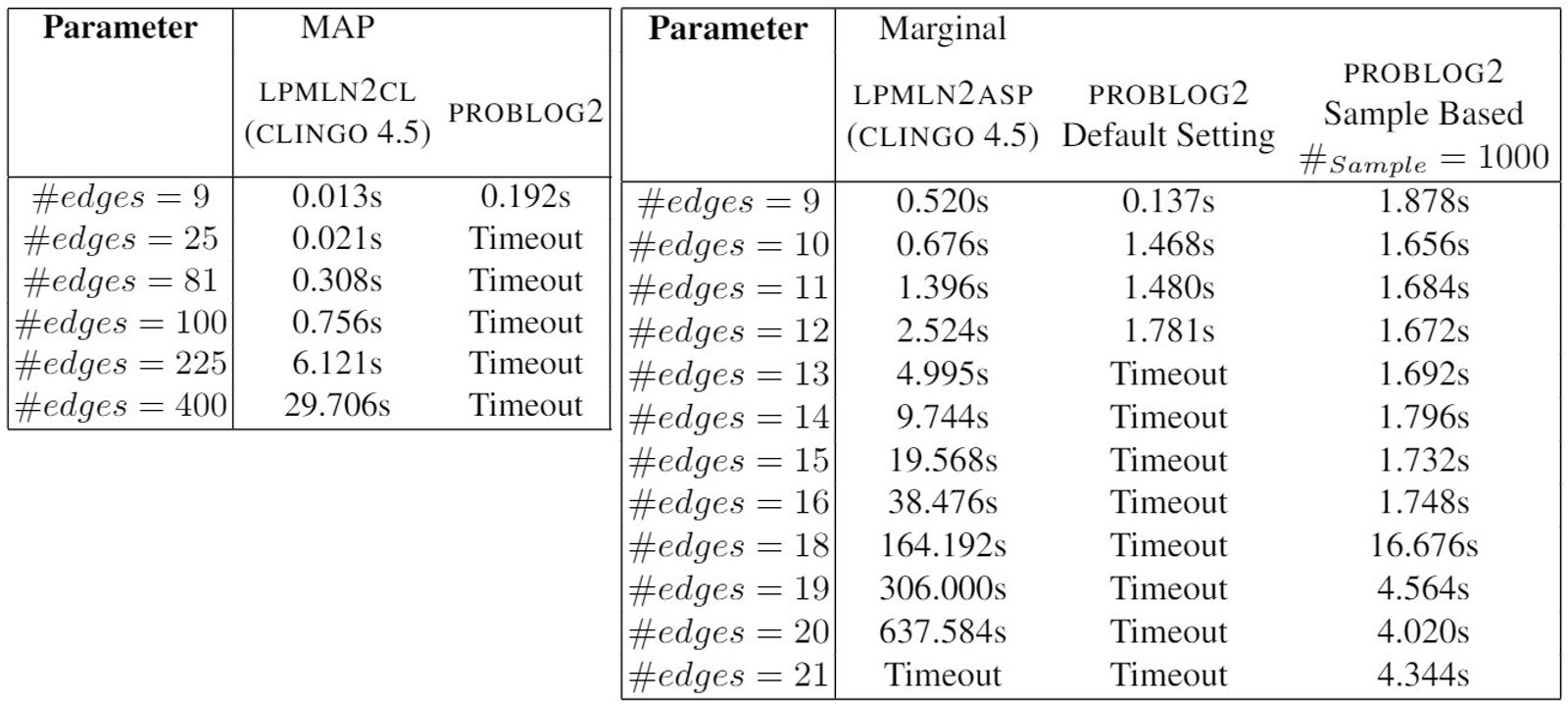}
	\caption{Running Statistics on Reachability in a Probabilistic Graph}
	\label{tab:performance-problog} \vspace{-0.5cm}
\end{figure}
Figure \ref{tab:performance-problog} shows the running time of each experiment.
{\sc lpmln2asp} outperforms {\sc problog2} with the default setting (exact inference) in both MAP inference and marginal probability computation. However, both systems' marginal probability computations are not scalable because they enumerate all models. 
Using a sampling-based inference instead, {\sc problog2} is able to handle marginal probability computation more effectively (the MAP inference in {\sc problog2} is exact inference only).
In general, compared to running on tight programs, {\sc problog2} is slow for non-tight programs such as the program we use here. A possible reason is that it has to convert the input program, combined with the query, into weighted Boolean formulas, which is expensive for non-tight programs.

\vspace{-3mm}
\subsection{Reasoning about Probabilistic Causal Model}

\begin{wrapfigure}{r}{0.38\textwidth}
\vspace{-3mm}
		\includegraphics[height=4.5cm,width=5cm]{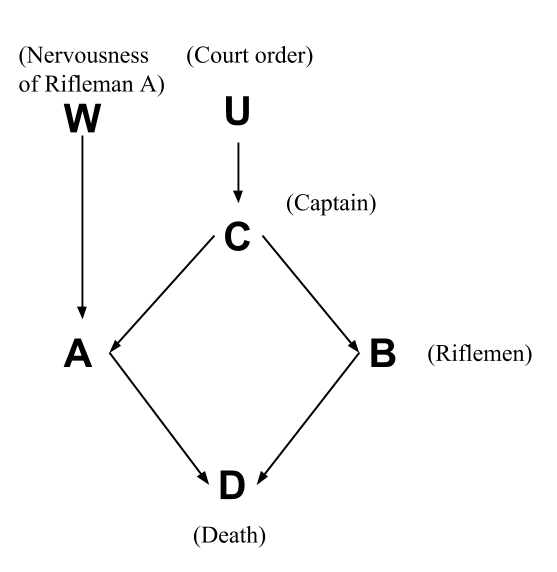}
	\caption{Firing Squad Example}
	\label{fig:FiringSquadPCM}
\end{wrapfigure}
\citeN{lee15markov} show how to represent Pearl's probabilistic causal model \cite{pearl00causality} by $\lpmln$. Due to the acyclicity assumption on the causality, the $\lpmln$ representation is tight, so we can use either implementation of $\lpmln$ to compute probabilistic queries on a PCM.
(Related to this, Appendix~A \cite{lee17computing-online} shows how Bayesian networks can be represented in $\lpmln$.)

As an example, consider a probabilistic version of the firing squad example, shown in Figure \ref{fig:FiringSquadPCM}. 
The court orders the execution ($U$) with probability $p$ and Rifleman A is nervous ($W$) with probability $q$. The nervousness of Rifleman A causes him shooting at the prisoner ($A$). The court order causes the Captain to signal ($C$), which again causes Rifleman A and Rifleman B to shoot at the prisoner. Either of Rifleman A and Rifleman B shooting causes the prisoner's death ($D$). We illustrate how we use $\lpmln$ systems to compute the counterfactual query ``Given that the prisoner is dead, what is the probability that the prisoner would be alive if Rifleman A had not shot?'' According to \citeN{pearl00causality}, the answer is $\frac{(1-p)q}{1-(1-p)(1-q)}$.

Theorem~4 from the paper by~\citeN{lee15markov} states that the counterfactual reasoning in PCM can be reduced to $\lpmln$ computation. The translation of PCM into $\lpmln$ in Section 4.4 by~\citeN{lee15markov} can be represented in the input language of {\sc lpmln2asp} as follows, where {\tt as}, {\tt bs}, {\tt cs}, {\tt ds} are nodes in the twin network, {\tt a1} means that {\tt a} is true; {\tt a0} means that {\tt a} is false; other atoms are defined similarly. Let $p=0.7$ and $q=0.2$.

\begin{multicols}{2}
\begin{lstlisting}
@log(0.7/0.3)  u.
@log(0.2/0.8)  w.

c :- u. 
a :- c.
a :- w.
b :- c.
d :- a.
d :- b.
\end{lstlisting}
\end{multicols}
\vspace{-0.7cm}
\begin{multicols}{2}
\begin{lstlisting}
cs :- u, not do(c1), not do(c0). 
as :- cs, not do(a1), not do(a0).
as :- w, not do(a1), not do(a0).
bs :- cs, not do(b1), not do(b0).
ds :- as, not do(d1), not do(d0).
ds :- bs, not do(d1), not do(d0).

cs :- do(c1).
as :- do(a1).
bs :- do(b1).
ds :- do(d1).
\end{lstlisting}
\end{multicols}

To represent the counterfactual query, the evidence file contains: 
\begin{lstlisting}
do(a0).
:- not d. 
\end{lstlisting}
Note the different ways that intervention ({\tt do(a0)}) and observation ({\tt d}) are encoded. 

With the command
{\tt lpmln2asp -i pcm.lp -r out -e evid.db -q ds}
we obtain 
{\tt ds 0.921047297896},
which means there is a $8\%$ chance that the prisoner would be alive. 

\section{Conclusion}\label{sec:conclusion}

We presented two implementations of $\lpmln$ using ASP and MLN solvers. 
This is based on extending the translations that turn $\lpmln$ into answer set programs and Markov logic to allow non-ground weighted rules.
%
Although the input language of {\sc clingo} does not have a built-in concept of probabilistic reasoning, its optimal answer set finding algorithm is shown to be effective in finding MAP estimates (most probable stable models). It is also interesting that the efficient stable model enumeration leads to competitive exact probability computation.

The implementations also serve for other probabilistic logic languages that are shown to be embeddable in $\lpmln$, such as ProbLog, Pearl's causal models, Bayesian networks, and P-log.

PrASP \cite{nickles16atool} is another system whose input language extends answer set programs with weights, although the semantics is different from that of $\lpmln$.
While $\lpmln$ systems turn an input program into another input program that can be computed by existing systems,  PrASP implements several native inference algorithms, including model counting, simulated annealing, flip-sampling, iterative refinement, etc.
The formal relationships between the language of PrASP and other languages have not been established.

The $\lpmln$ implementations suggest how to combine the solving techniques from the two solvers. While {\sc clingo} is efficient for grounding, MLN solvers consider subnetworks derived from the Markov blanket of query atoms and evidence. While {\sc clingo} does exact inference only, MLN solvers can perform sampling based approximate inference. Future work includes building a native algorithm for $\lpmln$ borrowing the techniques from the related systems. 

\medskip\noindent
{\bf Acknowledgements:} 
We are grateful to Zhun Yang, Brandon Gardell and the anonymous referees for their useful comments. This work was partially supported by the National Science Foundation under Grants IIS-1319794 and IIS-1526301.

\bibliographystyle{acmtrans}

 \BOC
\title{Appendix: Computing $\lpmln$ Using ASP and MLN Solvers} 

\begin{center}
{\large\textnormal{Online appendix for the paper}}   \\
\medskip
{\Large {\sl Computing $\lpmln$ Using ASP and MLN Solvers}
\\
\medskip
{\large\textnormal{published in Theory and Practice of Logic Programming}}
}

\medskip
Joohyung Lee, Samidh Talsania, and Yi Wang \\ 
{\sl School of Computing, Informatics and Decision Systems Engineering \\
Arizona State University, Tempe, AZ, USA}

\end{center}

\thispagestyle{empty}
\EOC

\begin{appendix}

\section{Bayesian Network in $\lpmln$} \label{ssec:bayes-net}

It is easy to represent Bayesian networks in $\lpmln$
similar to the way Bayesian networks are represented by weighted Boolean formulas \cite{sang05solving}.

We assume all random variables are Boolean.
Each conditional probability table associated with the nodes can be represented by a set of probabilistic facts. For each CPT entry $P(V=\true \mid V_1={S_1}, \dots, V_n={S_n}) = p$ where 
$S_1,\dots, S_n \in\{\true, \false\}$, we include a set of weighted facts
\bi
\ii $ln(p/(1-p)):\ \  PF(V,S_1,\dots,S_n)$ if $0<p<1$;
\ii $\alpha:\ \ \  PF(V, S_1,\dots,S_n)$ if $p=1$;
\ii $\alpha:\ \ \  \ar\ \no\ PF(V, S_1,\dots,S_n)$ if $p=0$.
\ei
For each node $V$ whose parents are $V_1, \dots, V_n$, the directed edges can be represented by rules 
\[   
   \alpha:\ V \ar V_1^{S_1}, \dots, V_n^{S_n}, PF(V, S_1, \dots, S_n)   \ \qquad \ (S_1,\dots, S_n \in\{\true, \false\})
\]
where $V_i^{S_i}$ is $V_i$ if $S_i$ is \true, and $\no\ V_i$ otherwise. 

For example, in the firing example in Figure~\ref{fig:bayes-example}, the conditional probability table for the node ``alarm" can be represented by 

\begin{multicols}{2}
\begin{lstlisting}
@log(0.5/0.5)     pf(a,t1f1). 
@log(0.85/0.15)   pf(a,t1f0). 
@log(0.99/0.01)      pf(a,t0f1). 
@log(0.0001/0.0009)  pf(a,t0f0). 
\end{lstlisting}
\end{multicols}

The directed edges can be represented by hard rules as follows: 
\begin{multicols}{2}
%
%
%
%

\lstset{
   basicstyle=\small\ttfamily,
   basewidth=0.5em,
   numbers=none,
   numberstyle=\tiny,  
   stringstyle=\small\ttfamily,
   showspaces=false,
   showstringspaces=false
}

\begin{lstlisting}
tampering :- pf(t). 

fire :- pf(f). 

alarm :- tampering, fire, pf(a,t1f1).   
alarm :- tampering, not fire, pf(a,t1f0). 
alarm :- not tampering, fire, pf(a,t0f1). 
alarm :- not tampering, not fire, pf(a,t0f0).  

    smoke :- fire, pf(s,f1).  
    smoke :- not fire, pf(s,f0). 

    leaving :- alarm, pf(l,a1). 
    leaving :- not alarm, pf(l,a0). 

    report :- leaving, pf(r,l1). 
    report :- not leaving, pf(r,l0). 
\end{lstlisting}
\end{multicols}

\begin{figure}[b]
		\includegraphics[height=5cm]{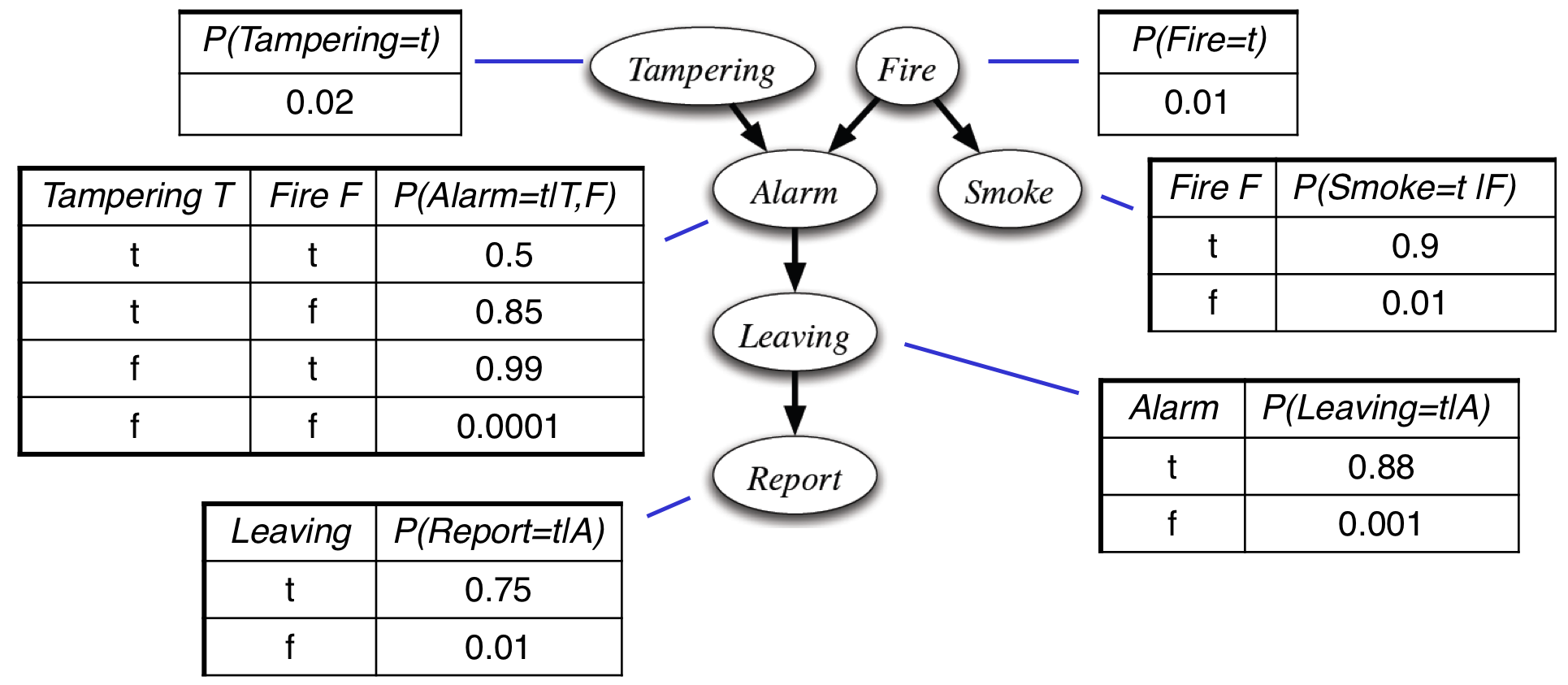}
	\caption{Bayes Net Example} 
	\label{fig:bayes-example}
\end{figure}

\begin{thm}
For any Bayesian network whose random variables are Boolean and any interpretation $I$, the probability of $I$ according to the Bayesian network semantics coincides with the probability of $I$ for the translated $\lpmln$ program. 
\end{thm}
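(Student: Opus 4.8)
The plan is first to pin down what ``the probability of $I$ for the translated $\lpmln$ program'' should mean: the translation introduces auxiliary atoms $PF(V,\mathbf S)$ that are not part of the Bayesian network, so I read the claim as a statement about the \emph{marginal} distribution of the translated program $\Pi$ over the original atoms (the names of the random variables), i.e. the probability of a joint valuation $v$ of the random variables is $\sum_{J\in\sm[\Pi]:\ J\ \text{agrees with}\ v} P_\Pi(J)$. This is the standard reading and is the one consistent with the ProbLog translation used elsewhere in the paper, since each $PF$ atom is exactly the $\lpmln$ image of a ProbLog probabilistic fact. By Theorem~\ref{thm:lpmln-pnt} I may work with the reward form $W_\Pi$. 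I would also note at the outset that, since the network is a DAG, the rules $V\ar V_1^{S_1},\dots,V_n^{S_n},PF(V,\mathbf S)$ induce an acyclic positive dependency, so $\Pi$ is tight and its stable models are the supported models.

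The first real step is to characterize $\sm[\Pi]$. I would compute the Gelfond--Lifschitz reduct $(\overline{\Pi_I})^I$ and observe it is a definite program whose only rules with a random-variable atom $V$ in the head are the CPT rules; fixing $I$, the rule for a parent-configuration $\mathbf S$ survives the reduct and can fire only when $\mathbf S$ is the \emph{realized} configuration $\mathbf S^\ast_V(I)$ of $V$'s parents in $I$. Evaluating the least model in topological order of the DAG and equating it with $I$ yields the clean description: $I\in\sm[\Pi]$ iff for every node $V$ we have $V\in I \iff PF(V,\mathbf S^\ast_V(I))\in I$, while every remaining $PF$ atom (those for non-realized configurations) is entirely free. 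Hence $\sm[\Pi]$ is in bijection with the $\{0,1\}$-valuations of the $PF$ atoms, the $V$ atoms being a deterministic function of them. Then I would compute weights: every probabilistic stable model satisfies all hard rules (one does, e.g. the one built from any valuation of positive $P_{BN}$-probability, so the $\alpha\to\infty$ limit kills the rest), contributing a common factor $e^{k\alpha}$; a soft fact $PF(V,\mathbf S)$ (with $0<p_{V,\mathbf S}<1$) is satisfied exactly when it is in $I$, so $W_\Pi(I)=e^{k\alpha}\prod_{PF(V,\mathbf S)\in I}\tfrac{p_{V,\mathbf S}}{1-p_{V,\mathbf S}}$.

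Finally I would marginalize. Fix a joint valuation $v$ and sum $W_\Pi(I)$ over the stable models with $I$ agreeing with $v$ on the random-variable atoms: for each $V$ the atom $PF(V,\mathbf S^\ast_V(v))$ is forced to match $v(V)$ (factor $\tfrac{p_{V,\mathbf S^\ast}}{1-p_{V,\mathbf S^\ast}}$ if $v(V)=\true$, else $1$), while every other $PF(V,\mathbf S)$ ranges freely (factor $1+\tfrac{p_{V,\mathbf S}}{1-p_{V,\mathbf S}}=\tfrac{1}{1-p_{V,\mathbf S}}$). The product of the free factors equals $C\cdot\prod_V(1-p_{V,\mathbf S^\ast_V(v)})$ with $C=\prod_{V,\mathbf S}\tfrac{1}{1-p_{V,\mathbf S}}$ \emph{independent of $v$}, and multiplying by the forced factors collapses the $V$-th term to $P(V=v(V)\mid \mathrm{pa}(V)=\mathbf S^\ast_V(v))$. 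So $\sum_{I\ \text{agrees with}\ v}W_\Pi(I)=e^{k\alpha}C\cdot P_{BN}(v)$; summing over all $v$ and using $\sum_v P_{BN}(v)=1$ gives normalization constant $e^{k\alpha}C$, hence the marginal of $v$ under $\Pi$ is $P_{BN}(v)$, and the $e^{k\alpha},C$ factors make the $\alpha\to\infty$ limit trivial. The hard part will be the stable-model characterization in step two --- rigorously justifying that the $PF$ atoms for non-realized configurations are genuinely free (this is where tightness and the default negation in the CPT rules must be handled carefully) --- together with the bookkeeping that makes the free-factor constant $C$ independent of $v$; I would also add a one-line remark on the boundary entries $p\in\{0,1\}$, where the corresponding $PF$ atom is forced rather than free and the computation specializes accordingly.
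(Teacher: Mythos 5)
You should first note that the paper states this theorem in Appendix~A \emph{without} proof, so there is no paper proof to compare against; what follows assesses your argument on its own. Your overall route is the right one and, in my view, the intended one: read the claim as a statement about the marginal of the translated program over the random-variable atoms, observe that for the realized parent configuration the atom $PF(V,\mathbf S^\ast_V(I))$ is tied to $V$ while the $PF$ atoms for non-realized configurations are free, and then sum out the free atoms so that their factors $1/(1-p_{V,\mathbf S})$ form a constant $C$ independent of the valuation $v$ while the forced factors collapse to the CPT entries, giving $C\cdot P_{BN}(v)$ before normalization. This is the standard weighted-model-counting argument (Sang et al.\ 2005, which the paper cites as its inspiration), and your bookkeeping in the final step is correct.

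The place that needs repair is your characterization of ${\rm SM}[\Pi]$. Membership in ${\rm SM}[\Pi]$ only requires $I$ to be a stable model of the rules it \emph{satisfies}; violated hard rules are simply dropped (at a weight cost). Hence only the support direction $V\in I\Rightarrow PF(V,\mathbf S^\ast_V(I))\in I$ is forced, not your biconditional: for a single node $V$ with $P(V{=}\true)=p\in(0,1)$, the interpretation $\{PF(V)\}$ with $V$ false lies in ${\rm SM}[\Pi]$ even though it violates the hard rule $V\ar PF(V)$; it merely loses a factor $e^\alpha$ and gets limit probability $0$. So your biconditional describes the stable models satisfying all hard rules, not ${\rm SM}[\Pi]$, and the proof must explicitly restrict to that subset and justify that the remaining stable models vanish in the normalization --- your parenthetical appeal to the $\alpha\to\infty$ limit (equivalently, Proposition~2 of Lee and Wang 2016, which the paper already invokes) does exactly this and should be promoted to an explicit step rather than folded into a mislabeled characterization. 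Relatedly, your closing remark underestimates the $p=0$ boundary case: the paper encodes it as the hard constraint $\alpha:\ \ar\ \no\ PF(V,\mathbf S)$, and since that $PF$ atom has no supporting rule it is false in \emph{every} stable model, so this constraint is violated by every stable model and the claim ``every probabilistic stable model satisfies all hard rules'' is literally false in that case. The theorem still holds because all stable models then miss the same $e^\alpha$ factors, which cancel uniformly in the normalization, but that is an argument you need to make rather than dismiss with ``specializes accordingly.''
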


Since Bayesian networks are represented by directed acyclic graphs, $\lpmln$ programs that represent them are always tight. So both {\sc lpmln2asp} and {\sc lpmln2mln} can be used to compute Bayesian networks. 

\begin{itemize}
\item \emph{Diagnostic Inference} is to compute the probability of the cause given the effect. For example, to compute $P(fire =\mathbf{t} \mid leaving=\mathbf{t})$, the user can invoke 
\begin{lstlisting}
lpmln2asp -i fire-bayes.lpmln -e evid.db -q fire
\end{lstlisting}
where {\tt evid.db} contains the line 
\begin{lstlisting}
:- not leaving. 
\end{lstlisting}
This outputs
\begin{lstlisting}
fire 0.352151116689
\end{lstlisting}

\item \emph{Predictive Inference} is to compute the probability of the effect given the cause. For example, to compute $P(leaving=\mathbf{t} \mid  fire=\mathbf{t})$, the user can invoke 
\begin{lstlisting}
lpmln2asp -i fire-bayes.lpmln -e evid.db  -q leaving
\end{lstlisting}
where {\tt evid.db} contains the line 
\begin{lstlisting}
:- not fire. 
\end{lstlisting}
This outputs
\begin{lstlisting}
leaving 0.862603541626
\end{lstlisting}

\item \emph{Mixed Inference} is to combine \emph{predictive} and \emph{diagnostic} inference. For example, to compute $P(alarm=\mathbf{t} \mid fire=\mathbf{f}, leaving=\mathbf{t})$, the user can invoke 
\begin{lstlisting}
lpmln2asp -i fire-bayes.lpmln -e evid.db  -q alarm
\end{lstlisting}
where {\tt evid.db} contains two lines {\tt } 
\begin{lstlisting}
:- fire.
:- not leaving.
\end{lstlisting}
This outputs
\begin{lstlisting}
alarm 0.938679679707
\end{lstlisting}

\item \emph{Intercausal Inference} is to compute the probability of a cause given an effect common to multiple causes. For example, to compute $P(tampering=\mathbf{t} \mid fire=\mathbf{t}, alarm=\mathbf{t})$, the user can invoke 
\begin{lstlisting}
lpmln2asp -i fire-bayes.lpmln -e evid.db  -q tampering
\end{lstlisting}
where {\tt evid.db} contains two lines 
\begin{lstlisting}
:- not fire.
:- not alarm.
\end{lstlisting}
This outputs
\begin{lstlisting}
tampering 0.0102021964693
\end{lstlisting}

\item \emph{Explaining away}: Suppose we know that $alarm$ rang. Then we can use \emph{Diagnostic Inference} to calculate $P(tampering=\mathbf{t} \mid alarm=\mathbf{t})$. But what happens if we now know that there was a $fire$ as well? In this case $P(tampering=\mathbf{t} \mid alarm=\mathbf{t})$ will change to $P(tampering=\mathbf{t} \mid fire=\mathbf{t}, alarm=\mathbf{t})$. In this case, knowing that there was a $fire$ explains away $alarm$, and hence affecting the probability of $tampering$. 
For example, to compute $P(tampering=\mathbf{t} \mid alarm=\mathbf{t})$, 
the user can invoke 
\begin{lstlisting}
lpmln2asp -i fire-bayes.lpmln -e evid.db  -q tampering
\end{lstlisting}
where {\tt evid.db} contains line 
\begin{lstlisting}
:- not alarm.
\end{lstlisting}
This outputs
\begin{lstlisting}
tampering 0.633397289908
\end{lstlisting}
If we compare this result with the result of \emph{Intercausal Inference}, we see that $P(tampering=\mathbf{t} \mid alarm=\mathbf{t}) > P(tampering=\mathbf{t} \mid fire=\mathbf{t}, alarm=\mathbf{t})$. Observing the value of $fire$ explains away the $tampering$ i.e., the probability of $tampering$ decreases. 

\end{itemize}
\BOCC
\bi
\ii To compute $P(fire \mid alarm)$, one can invoke 
\begin{lstlisting}
   lpmln2mln -i fire-bayes.lpmln -e evid1.db -r output -q fire
\end{lstlisting}
where {\tt evid1.db} contains the line {\tt alarm}. 

\ii To compute $P(fire \mid  alarm, \neg tampering)$, one can invoke 
\begin{lstlisting}
   lpmln2mln -i fire-bayes.lpmln -e evid2.db -r output -q fire
\end{lstlisting}
where {\tt evid1.db} contains two lines {\tt alarm} and {\tt !tampering}. 
\ei
\EOCC

\section{Proof of Theorem \ref{thm:lpmln-pnt}} 

\noindent{\bf Theorem~\ref{thm:lpmln-pnt} \optional{thm:lpmln-pnt}}\
\ 
For any $\lpmln$ program $\Pi$ and any interpretation $I$, 
\[
  W_{\Pi}(I) \propto W^{\rm pnt}_{\Pi}(I)
\text{ \ \ \ \ and \ \ \ \ }
  P_\Pi(I) = P_\Pi^{\rm pnt}(I).
\]

\begin{proof}
Let 
\[
TW_{\Pi} = exp\left(\sum_{w: F \in \Pi}w\right).
\]

We first show that $W_{\Pi}(I) = TW_{\Pi}\cdot W^{\rm pnt}_{\Pi}(I)$. This is obvious when $I\notin \sm[\Pi]$. 

When $I\in \sm[\Pi]$, we have
\begin{align*}
  W_\Pi(I) &= exp\bigg(\sum_{\text{$w:F\in\Pi$ and $I\models  F$}}w\bigg) \\
               &= exp\bigg(\sum_{w: F \in \Pi} w - \sum_{\text{$w:F\in\Pi$ and $I\not\models F$}} w)\\
               &= exp\bigg(\sum_{w: F \in \Pi}w\bigg)\cdot exp\bigg(-\sum_{\text{$w:F\in\Pi$ and $I\not\models F$}}w\bigg)\\
               &= TW_{\Pi}\cdot exp\bigg(- \sum_{\text{$w:F\in\Pi$ and $I\not\models  F$}}w\bigg)\\
&= TW_{\Pi}\cdot W^{\rm pnt}_{\Pi}(I).
\end{align*}
Consequently,
\begin{align*}
  P_\Pi(I) &= \frac{W_{\Pi}(I)}{\sum_J W_{\Pi}(J)}\\
              &= \frac{TW_{\Pi}\cdot W^{\rm pnt}_{\Pi}(I)}{\sum_J TW_{\Pi}\cdot W^{\rm pnt}_{\Pi}(J)}\\
              &= \frac{W^{\rm pnt}_{\Pi}(I)}{\sum_J W^{\rm pnt}_{\Pi}(J)}\cdot \frac{TW_{\Pi}}{TW_{\Pi}}\\
              &= \frac{W^{\rm pnt}_{\Pi}(I)}{\sum_J W^{\rm pnt}_{\Pi}(J)}\\
              &= P_\Pi^{\rm pnt}(I).
\end{align*}
\end{proof}

\BOCC
\section{Proof of Theorem \ref{thm:lpmln-pnt}} 

\noindent{\bf Theorem~\ref{thm:lpmln-pnt} \optional{thm:lpmln-pnt}}\\
 
For any $\lpmln$ program $\Pi$ and any interpretation $I$, 
\[
  W_{\Pi}(I) \propto W^{\rm pnt}_{\Pi}(I)
\text{ \ \ \ \ and \ \ \ \ }
  P_\Pi(I) = P_\Pi^{\rm pnt}(I).
\]

\proof
Let 
\[  
  TW_\Pi = exp\left(\sum_{w: F \in \Pi}w\right).
\]
be the "total" weight of $\Pi$.

\begin{align}
\nonumber W_{\Pi}(I) &= exp(\sum_{\text{$w:F\in\Pi$ and $I\models  F$}}w) \\
\nonumber &= exp(\sum_{w: F \in \Pi}w - \sum_{\text{$w:F\in\Pi$ and $I\not\models  F$}}w)\\
\nonumber &= exp(\sum_{w: F \in \Pi}w)\cdot exp(- \sum_{\text{$w:F\in\Pi$ and $I\not\models  F$}}w)\\
\nonumber &= TW_{\Pi}\cdot exp(- \sum_{\text{$w:F\in\Pi$ and $I\not\models  F$}}w)\\
\nonumber &= TW_{\Pi}\cdot W^{\rm pnt}_{\Pi}(I)
\end{align}
Consequently,
\begin{align}
\nonumber P_{\Pi}(I) &= \frac{W_{\Pi}(I)}{\sum_J W_{\Pi}(J)}\\
\nonumber &= \frac{TW_{\Pi}\cdot W^{\rm pnt}_{\Pi}(I)}{\sum_J TW_{\Pi}\cdot W^{\rm pnt}_{\Pi}(J)}\\
\nonumber &= \frac{W^{\rm pnt}_{\Pi}(I)}{\sum_J W^{\rm pnt}_{\Pi}(J)}\cdot \frac{\sum_J TW_{\Pi}}{\sum_J TW_{\Pi}}\\
\nonumber &= \frac{W^{\rm pnt}_{\Pi}(I)}{\sum_J W^{\rm pnt}_{\Pi}(J)}\\
\nonumber &= P_\Pi^{\rm pnt}(I)
\end{align}
\qed
\EOCC

\section{Proof of Theorem \ref{thm:lpmln2asp-rwd}} 

We divide the ground program obtained from ${\sf lpmln2asp^{rwd}}(\Pi)$ into three parts:
\[
    SAT(\Pi) \cup ORIGIN(\Pi) \cup WC(\Pi)
\]
where
\begin{align*}
  SAT(\Pi) =& \{{\tt sat}(i, w_i, {\bf c}) \ar  \j{Head}_i({\bf c}) 
      \mid w_i:\ \j{Head}_i({\bf c}) \ar\j{Body}_i({\bf c})\in Gr(\Pi)\}\ \cup\\
                  & \{{\tt sat}(i, w_i, {\bf c}) \ar  {\tt not}\ \j{Body}_i({\bf c}) 
      \mid w_i:\ \j{Head}_i({\bf c}) \ar \j{Body}_i({\bf c})\in Gr(\Pi)\}
\end{align*}
\[
   ORIGIN(\Pi) = \{\j{Head}_i({\bf c})\ar \j{Body}_i({\bf c}), {\tt not}\ {\tt not}\ {\tt sat}(i, w_i, {\bf c}) 
      \mid w_i:\ \j{Head}_i({\bf c}) \ar \j{Body}_i({\bf c})\in Gr(\Pi)\}
\]
and 
\[
   WC(\Pi) =\{:\sim {\tt sat}(i, w_i, {\bf c}). \ \ [-w_i@l, i, {\bf c}]
       \mid w_i:\  \j{Head}_i({\bf c}) \leftarrow \j{Body}_i({\bf c})\in Gr(\Pi)\}
\]
\begin{lemma}\label{lem:lpmln2asp_rwd}
For any $\lpmln$ program $\Pi$,
\[
    \phi(I)=I\cup \{{\tt sat}(i, w_i, {\bf c}) \mid w_i:\j{Head}_i({\bf c}) \ar
     \j{Body}_i({\bf c})\in Gr(\Pi), I\models  \j{Head}_i({\bf c}) \leftarrow \j{Body}_i({\bf c})\}
\] 
is a 1-1 correspondence between $\sm[\Pi]$ and the stable models of $SAT(\Pi)\cup ORIGIN(\Pi)$.
\end{lemma}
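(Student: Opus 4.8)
The plan is to reduce the lemma to three claims and then combine them. Write $\sigma$ for the signature of $Gr(\Pi)$; since $\phi(I)$ extends $I$ only by ${\tt sat}$-atoms we have $\phi(I)\cap\sigma=I$, so $\phi$ is injective, and it suffices to show that $\phi$ maps $\sm[\Pi]$ \emph{onto} the set of stable models of $SAT(\Pi)\cup ORIGIN(\Pi)$. I would prove: \textbf{(a)} for every interpretation $I$ of $\sigma$, ${\tt sat}(i,w_i,{\bf c})\in\phi(I)$ iff $I\models\j{Body}_i({\bf c})\rar\j{Head}_i({\bf c})$, i.e.\ iff the $i$-th ground rule belongs to $\overline{\Pi_I}$; \textbf{(b)} every stable model $J$ of $SAT(\Pi)\cup ORIGIN(\Pi)$ satisfies $J=\phi(J\cap\sigma)$, i.e.\ the ${\tt sat}$-atoms are forced by the $\sigma$-part; and \textbf{(c)} $I\in\sm[\Pi]$ iff $\phi(I)$ is a stable model of $SAT(\Pi)\cup ORIGIN(\Pi)$. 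Claims (b) and (c) together give that the image of $\phi$ on $\sm[\Pi]$ is exactly the set of stable models, so (a)--(c) imply the lemma.

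Claim (a) is immediate from the definition of $\phi$ and the propositional equivalence of ``$I\models\j{Head}_i({\bf c})$ or $I\not\models\j{Body}_i({\bf c})$'' with ``$I\models\j{Body}_i({\bf c})\rar\j{Head}_i({\bf c})$''; this is the step that links the syntactic translation to the $\lpmln$ semantics, since $\overline{\Pi_I}$ is by definition the set of unweighted rules satisfied by $I$. For claim (b), I would use that ${\tt sat}$-atoms occur in heads only in $SAT(\Pi)$: in the reduct of $SAT(\Pi)\cup ORIGIN(\Pi)$ with respect to a stable model $J$, the only rules with head ${\tt sat}(i,w_i,{\bf c})$ are ${\tt sat}(i,w_i,{\bf c})\ar\j{Head}_i({\bf c})$ and---when it survives the reduct, i.e.\ when $J\not\models\j{Body}_i({\bf c})$---the fact obtained from ${\tt sat}(i,w_i,{\bf c})\ar{\tt not}\ \j{Body}_i({\bf c})$. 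Hence by minimality ${\tt sat}(i,w_i,{\bf c})\in J$ exactly when $J\cap\sigma\models\j{Head}_i({\bf c})$ or $J\cap\sigma\not\models\j{Body}_i({\bf c})$, which by (a) says $J=\phi(J\cap\sigma)$.

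The main work is claim (c), which I would establish by computing the reduct $\bigl(SAT(\Pi)\cup ORIGIN(\Pi)\bigr)^{\phi(I)}$ and comparing its minimal models with those of $(\overline{\Pi_I})^I$. The key computation is that the double negation ${\tt not}\,{\tt not}\,{\tt sat}(i,w_i,{\bf c})$ in $ORIGIN(\Pi)$ reduces to $\top$ precisely when ${\tt sat}(i,w_i,{\bf c})\in\phi(I)$ and to $\bot$ otherwise; by (a) this means the rules of $ORIGIN(\Pi)$ surviving in the reduct are exactly the rules $\j{Head}_i({\bf c})\ar\j{Body}_i({\bf c})$, with their bodies reduced with respect to $I$, for the ground rules lying in $\overline{\Pi_I}$---so $ORIGIN(\Pi)^{\phi(I)}$ coincides with $(\overline{\Pi_I})^I$ up to rules trivially removed on both sides because a negative body literal is falsified by $I$. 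On the other hand $SAT(\Pi)^{\phi(I)}$ consists only of rules whose heads are ${\tt sat}$-atoms and whose reduced bodies mention only $\sigma$, so the reduct splits into a bottom part $ORIGIN(\Pi)^{\phi(I)}$ over $\sigma$ and a top part $SAT(\Pi)^{\phi(I)}$ that defines the ${\tt sat}$-atoms deterministically from the $\sigma$-atoms. Applying the splitting theorem, $\phi(I)$ is a minimal model of the whole reduct iff $I=\phi(I)\cap\sigma$ is a minimal model of $(\overline{\Pi_I})^I$ (the top part imposes no constraint on $\sigma$, and the ${\tt sat}$-part of $\phi(I)$ already equals the set of ${\tt sat}$-atoms forced by $I$ via (a)); equivalently, iff $I$ is a stable model of $\overline{\Pi_I}$, i.e.\ $I\in\sm[\Pi]$. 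One also checks routinely, again using (a), that $\phi(I)$ is always a model of its own reduct.

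I expect the main obstacle to be exactly this apparent cyclic dependency: ${\tt sat}$-atoms are defined from $\sigma$-atoms in $SAT(\Pi)$, while $\sigma$-atoms are re-derived with the help of ${\tt sat}$-atoms in $ORIGIN(\Pi)$, so the program itself is not splittable. The point is that the double negation makes $ORIGIN(\Pi)$ depend on the ${\tt sat}$-atoms only through negation, so that once a candidate is fixed and the reduct is taken the cycle collapses and the splitting theorem applies; turning this observation into a clean argument---and handling the boundary cases where a reduced body becomes unsatisfiable---is the technical heart of the proof.
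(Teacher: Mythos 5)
Your proposal is correct, but it follows a genuinely different route from the paper. The paper applies the (symmetric) splitting theorem directly to $SAT(\Pi)\cup ORIGIN(\Pi)$: since the ${\tt sat}$-atoms occur in $ORIGIN(\Pi)$ only under double negation (hence not strictly positively), the atoms of $\sigma$ occur in $SAT(\Pi)$ only in bodies, and no strongly connected component of the dependency graph mixes the two signatures, $\phi(I)$ is a stable model of the union iff it is a stable model of $SAT(\Pi)$ w.r.t.\ $\sigma_{sat}$ and of $ORIGIN(\Pi)$ w.r.t.\ $\sigma$; the $SAT$ part is then handled by tightness and completion, and the $ORIGIN$ part by a reduct argument tied to $I$ being a stable model of $\overline{\Pi_I}$. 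You instead argue entirely at the level of the reduct of the whole program relative to a candidate: the double negation ${\tt not}\ {\tt not}\ {\tt sat}(i,w_i,{\bf c})$ evaluates away, the reduct becomes a positive program layered as $ORIGIN(\Pi)^{\phi(I)}$ (which, as you note, coincides with $(\overline{\Pi_I})^I$ up to trivially satisfied rules) beneath a deterministic definition of the ${\tt sat}$-atoms, and Lifschitz--Turner-style splitting of this positive reduct, together with your claim (b) that minimality forces $J=\phi(J\cap\sigma)$ for every stable model $J$, yields the bijection; claim (b) is right provided you note that its ``if'' direction comes from modelhood and only its ``only if'' direction from minimality. One remark: your statement that ``the program itself is not splittable'' holds only for splitting in the Lifschitz--Turner sense; the more general splitting theorem the paper invokes does apply to the program directly, precisely because the cross-dependencies are non-strictly-positive. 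Your route buys a more elementary, self-contained argument that avoids both that stronger splitting theorem and the completion/tightness step; the paper's route buys modularity, verifying the two components once and for all rather than inside a reduct computation relative to each candidate.
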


\begin{proof}
Let $\sigma$ be the signature of $\Pi$, and let $\sigma_{sat}$ be the set 
\[
  \{{\tt sat}(i, w_i, {\bf c}) \mid  
         w_i:\ \j{Head}_i({\bf c}) \ar \j{Body}_i({\bf c})\in Gr(\Pi)\}.
\]
It can be seen that
\begin{itemize}
\item each strongly connected component of the dependency graph of $SAT(\Pi)\cup ORIGIN(\Pi)$ w.r.t. $\sigma\cup \sigma_{sat}$ is a subset of $\sigma$ or a subset of $\sigma_{sat}$;

\item no atom in $\sigma_{sat}$ has a strictly positive occurrence in $ORIGIN(\Pi)$;

\item no atom in $\sigma$ has a strictly positive occurrence in $SAT(\Pi)$.
\end{itemize}
Thus, according to the splitting theorem, $\phi(I)$ is a stable model of $SAT(\Pi)\cup ORIGIN(\Pi)$ if and only if $\phi(I)$ is a stable model of $SAT(\Pi)$ w.r.t. $\sigma_{sat}$ and is a stable model of $ORIGIN(\Pi)$ w.r.t. $\sigma$.

First, assuming that $I$ belongs to $\sm[\Pi]$, we will prove that $\phi(I)$ is a stable model of $SAT(\Pi)\cup ORIGIN(\Pi)$. Let $I$ be a member of $\sm[\Pi]$. 
\begin{itemize}
\item {\bf $\phi(I)$ is a stable model of $SAT(\Pi)$ w.r.t. $\sigma_{sat}$.}\ \  \ \ 
By the definition of $\phi$, ${\tt sat}(i, w_i, {\bf c})\in \phi(I)$ if and only if $I\models \j{Head}_i({\bf c})\ar \j{Body}_i({\bf c})$, in which case either $I\models \j{Head}_i({\bf c})$ or $I\not\models \j{Body}_i({\bf c})$. 
This means 
\[ 
    \phi(I)\models  SAT(\Pi) \cup \{{\tt sat}(i, w_i, {\bf c})\rightarrow \j{Head}_i({\bf c}) \vee \neg\j{Body}_i({\bf c})\mid w_i:\j{Head}_i({\bf c}) \ar\ \j{Body}_i({\bf c})\in Gr(\Pi)\},
\] 
which is the completion of $SAT(\Pi)$. 
It is obvious that $SAT(\Pi)$ is tight on $\sigma_{sat}$. So $\phi(I)$ is a stable model of $SAT(\Pi)$ w.r.t. $\sigma_{sat}$.

\item {\bf $\phi(I)$ is a stable model of $ORIGIN(\Pi)$ w.r.t. $\sigma$.}\ \ \ \ 
It is clear that $\phi(I)$ satisfies $ORIGIN(\Pi)$. Assume for the sake of contradiction that there is an interpretation $J\subset \phi(I)$ such that $J$ and $\phi(I)$ agree on $\sigma^{sat}$ and $J\models ORIGIN(\Pi)^{\phi(I)}$. Then 
\[ 
   J\models \j{Head}_i({\bf c})^{\phi(I)}\ar\j{Body}_i({\bf c})^{\phi(I)}, ({\tt not}\ {\tt not}\ {\tt sat}(i,w_i,{\bf c}))^{\phi(I)}
\]
for every rule
\[ 
   \j{Head}_i({\bf c})\ar\j{Body}_i({\bf c}), {\tt not}\ {\tt not}\ {\tt sat}(i, w_i, {\bf c})
\] 
in $ORIGIN(\Pi)$.
Since $\phi(I)$ satisfies $SAT(\Pi)$,  it follows that for every rule $\j{Head}_i({\bf c})\ar\j{Body}_i({\bf c})$ satisfied by $\phi(I)$, we have 
$({\tt not}\ {\tt not}\ {\tt sat}(i,w_i,{\bf c}))^{\phi(I)}=\top$ so that 
$J\models \j{Head}_i({\bf c})^{\phi(I)}\ar\j{Body}_i({\bf c})^{\phi(I)}$, 
or equivalently, 
$J\models \j{Head}_i({\bf c})^I \ar\j{Body}_i({\bf c})^I$, 
which contradicts that $I$ is a stable model of $\overline{\Pi_I}$.
\end{itemize} 

Consequently, by the splitting theorem, $\phi(I)$ is a stable model of $SAT(\Pi)\cup ORIGIN(\Pi)$.

\bigskip
Next, assuming $\phi(I)$ is a stable model of $SAT(\Pi)\cup ORIGIN(\Pi)$, we will prove that $I$ belongs to $\sm[\Pi]$.

Let $\phi(I)$ be a stable model of $SAT(\Pi)\cup ORIGIN(\Pi)$. By the splitting theorem, $\phi(I)$ is a stable model of $SAT(\Pi)$ w.r.t. $\sigma_{sat}$ and $\phi(I)$ is a stable model of $ORIGIN(\Pi)$ w.r.t. $\sigma$. 

It is clear that $I\models \overline{\Pi_I}$. 

Assume for the sake of contradiction that there is an interpretation 
$J\subset I$ such that $J\models (\overline{\Pi_I})^I$. 
Take any rule 
\beq
   (\j{Head}_i({\bf c}))^{\phi(I)} \ar (\j{Body}_i({\bf c}))^{\phi(I)}, ({\tt not}\ {\tt not}\ {\tt sat}(i, w_i, {\bf c}))^{\phi(I)}
\eeq{origin-reduct}
in $(ORIGIN(\Pi))^{\phi(I)}$.

\medskip\noindent
{\sl Case 1}: $\phi(I)\not\models {\tt sat}(i, w_i, {\bf c})$. Clearly, $J\models \eqref{origin-reduct}$.

\medskip\noindent
{\sl Case 2}: $\phi(I)\models {\tt sat}(i, w_i, {\bf c})$. Since $\j{Head}_i({\bf c})$ and $\j{Body}_i({\bf c})$ do not contain ${\tt sat}$ predicates, \eqref{origin-reduct} is equivalent to 
\beq
  (\j{Head}_i({\bf c}))^I\ar(\j{Body}_i({\bf c}))^I.
\eeq{origin-reduct2}
Since $\phi(I)$ is a stable model of $SAT(\Pi)$ w.r.t. $\sigma_{sat}$, we have $\phi(I)\models \j{Head}_i({\bf c})\ar \j{Body}_i({\bf c})$, or equivalently, 
$I\models \j{Head}_i({\bf c})\ar \j{Body}_i({\bf c})$. So, $\j{Head}_i({\bf c})\ar\j{Body}_i({\bf c}) \in \overline{\Pi_I}$, and 
$\j{Head}_i({\bf c})^I\ar\j{Body}_i({\bf c})^I \in (\overline{\Pi_I})^I$.
Since $J\models (\overline{\Pi_I})^I$, it follows that  $J\models \eqref{origin-reduct}$ as well.

Since $J\subset \phi(I)$, $\phi(I)$ is not a stable model of $ORIGIN(\Pi)$ w.r.t. $\sigma$, which contradicts the assumption that it is.
Thus we conclude that $I$ is a stable model of~$\overline{\Pi_I}$, i.e., $I$ belongs to $\sm[\Pi]$.
\end{proof}

%
%
%
%

\BOCC

Since $I$ is a model of $\overline{\Pi_I}$, and ${\tt sat}(i, w_i, {\bf c})\in \phi(I)$ if and only if $I\models {\i Head}_i({\bf c})\ar {\i Body_i}({\bf c})$, $\phi(I)$ is a model of $ORIGIN(\Pi)$. 
Next, we show that for any interpretations $J$ and $I$ and any rule 
$\j{Head}_i({\bf c})^I\ar \j{Body}_i({\bf c})$ in $\Pi_I$ that is satisfied by $I$, we have
$J\models \j{Head}_i({\bf c})^I\ar \j{Body}_i({\bf c})^I$ 
iff
$J\models \j{Head}_i({\bf c})^I\ar \j{Body}_i({\bf c}), {\tt not}\ {\tt not}\ {\tt sat}(i, w_i, {\bf c})^I$ 

-====

Next we show that $\phi(I)$ satisfies the loop formula of $ORIGIN(\Pi)$. Let $L$ be any subset of $\sigma$ that $\phi(I)$ satisfies. Since $I$ is a stable model of $\overline{\Pi_I}$, we have
\[
I \models  LF_{\overline{\Pi_I}}(L)
\]
i.e., 
\[
I \models  L^{\wedge} \rightarrow \bigvee_{\substack{Head_i({\bf c})\cap L\neq \emptyset, \\Head_i({\bf c}) \leftarrow Body_i({\bf c})\in\overline{\Pi_I},\\ Body_i({\bf c})\cap L =\emptyset}} (Body_i({\bf c})\bigwedge_{b\in Head_i({\bf c})\setminus L} \neg b)
\]
Since $I\models  L$, $I\models  Head_i({\bf c})$ for all $Head_i({\bf c})\cap L\neq \emptyset$, and thus $\phi(I) \models  {\tt sat}(i, w_i, {\bf c})$, so we have
\[
\phi(I) \models  L^{\wedge} \rightarrow \bigvee_{\substack{Head_i({\bf c})\cap L\neq \emptyset,\\ Head_i({\bf c}) \leftarrow Body_i({\bf c})\in\overline{\Pi_I}, \\Body_i({\bf c})\cap L =\emptyset}} (Body_i({\bf c})\wedge \neg\neg {\tt sat}(i, w_i, {\bf c})\bigwedge_{b\in Head_i({\bf c})\setminus L} \neg b)
\]
Since $\{Head_i({\bf c}) \leftarrow Body_i({\bf c}), not\ not\ {\tt sat}(i, w_i, {\bf c})\mid Head_i({\bf c}) \leftarrow Body_i({\bf c}) \in \overline{\Pi_i}\}$ is a subset of $ORIGIN(\Pi)$, we have
\[
\phi(I) \models  L^{\wedge} \rightarrow \bigvee_{\substack{Head_i({\bf c})\cap L\neq \emptyset, \\Head_i({\bf c}) \leftarrow Body_i({\bf c}), \\not\ not\ {\tt sat}(i, w_i, {\bf c})\in ORIGIN(\Pi),\\ Body_i({\bf c})\cap L =\emptyset}} (Body_i({\bf c})\wedge \neg\neg {\tt sat}(i, w_i, {\bf c})\bigwedge_{b\in Head_i({\bf c})\setminus L} \neg b)
\]
i.e.,
\[
\phi(I) \models  L^{\wedge} \rightarrow LF_{ORIGIN(\Pi)}(L)
\]
So $\phi(I)$ is a stable model of $SAT(\Pi)\cup ORIGIN(\Pi)$.

Next, assuming $\phi(I)$ is a stable model of $SAT(\Pi)\cup ORIGIN(\Pi)$, we will prove that $I$ belongs to $\sm[\Pi]$.
Let $\phi(I)$ be a stable model of $SAT(\Pi)\cup ORIGIN(\Pi)$. By splitting theorem, $\phi(I)$ is a stable model of $ORIGIN(\Pi)$ w.r.t. $\sigma$. So $\phi(I)$ satisfies $ORIGIN(\Pi)$. It is easy to see that this implies $I\models  \overline{\Pi_I}$. Consider any subset $L$ of $sigma$ that is satisfied by $I$. Since $\phi(I)$ is a stable model of $ORIGIN(\Pi)$, we have
\[
\phi(I) \models  LF_{ORIGIN(\Pi)}(L)
\], i.e.,
\[
\phi(I) \models  L^{\wedge} \rightarrow \bigvee_{\substack{Head_i({\bf c})\cap L\neq \emptyset, \\Head_i({\bf c}) \leftarrow Body_i({\bf c}),\\ not\ not\ {\tt sat}(i, w_i, {\bf c})\in ORIGIN(\Pi),\\ Body_i({\bf c})\cap L =\emptyset}} (Body_i({\bf c})\wedge \neg\neg {\tt sat}(i, w_i, {\bf c})\bigwedge_{b\in Head_i({\bf c})\setminus L} \neg b)
\]
All $Head_i({\bf c})\leftarrow Body_i({\bf c}), not\ not\ {\tt sat}(i, w_i, {\bf c})$ such that $Head_i({\bf c})\cap L\neq \emptyset$ are satisfied by $\phi(I)$ and thus $Head_i({\bf c})\leftarrow Body_i({\bf c})\in \overline{\Pi_I}$. So we have
\[
\phi(I) \models  L^{\wedge} \rightarrow \bigvee_{Head_i({\bf c})\cap L\neq \emptyset, Head_i({\bf c}) \leftarrow Body_i({\bf c}),\in \overline{\Pi_I}, Body_i({\bf c})\cap L =\emptyset} (Body_i({\bf c})\wedge \neg\neg {\tt sat}(i, w_i, {\bf c})\bigwedge_{b\in Head_i({\bf c})\setminus L} \neg b)
\]
Since $\phi(I)\models  {\tt sat}(i, w_i, {\bf c})$ for all $Head_i({\bf c})\cap L\neq\emptyset$ (due to that $\phi(I)\models  L$), they can be removed from the above formulas, resulting in
\[
\phi(I) \models  L^{\wedge} \rightarrow \bigvee_{Head_i({\bf c})\cap L\neq \emptyset, Head_i({\bf c}) \leftarrow Body_i({\bf c}),\in \overline{\Pi_I}, Body_i({\bf c})\cap L =\emptyset} (Body_i({\bf c})\bigwedge_{b\in Head_i({\bf c})\setminus L} \neg b)
\]
Since $\phi(I)$ and $I$ agree on all atoms in $\sigma$, it can be further rewritten as
\[
I \models  L^{\wedge} \rightarrow \bigvee_{Head_i({\bf c})\cap L\neq \emptyset, Head_i({\bf c}) \leftarrow Body_i({\bf c}),\in \overline{\Pi_I}, Body_i({\bf c})\cap L =\emptyset} (Body_i({\bf c})\bigwedge_{b\in Head_i({\bf c})\setminus L} \neg b)
\]
which means
\[
I\models  LF_{\overline{\Pi_I}}(L)
\]
So $I$ is a stable model of $\overline{\Pi_I}$, and thus is a member of $\sm[\Pi]$. 
\EOCC

\bigskip
\noindent{\bf Theorem~\ref{thm:lpmln2asp-rwd} \optional{thm:lpmln2asp-rwd}}\
\ 
{
For any $\lpmln$ program $\Pi$, there is a 1-1 correspondence $\phi$ between $\sm[\Pi]$ \footnote{Recall the definition in Section~\ref{ssec:lpmln}.}  
and the set of stable models of ${\sf lpmln2asp^{rwd}}(\Pi)$,
where 
\[ 
  \phi(I)=I\cup \{{\tt sat}(i, w_i, {\bf c}) \mid 
                   w_i:\j{Head}_i({\bf c}) \ar \j{Body}_i({\bf c})\ \text{in ${Gr}(\Pi)$}, 
                   I\models \j{Body}_i({\bf c})\rar\j{Head}_i({\bf c})\}.
\]
Furthermore,
\[ 
   W_{\Pi}(I)= exp \Bigg(\sum_{{\tt sat}(i, w_i, {\bf c}) \in \phi(I)} w_i\Bigg)  .
\] 
Also, $\phi$ is a 1-1 correspondence between the most probable stable models of $\Pi$ and the optimal stable models of ${\sf lpmln2asp^{rwd}}(\Pi)$.
}
\medskip 

\begin{proof}
By Lemma~\ref{lem:lpmln2asp_rwd}, $\phi$ is a 1-1 correspondence between $\sm[\Pi]$ and the set of stable models of ${\sf lpmln2asp}^{rwd}(\Pi)$.

The fact
\begin{equation} 
   W_{\Pi}(I)= exp \Bigg(\sum_{{\tt sat}(i, w_i, {\bf c}) \in \phi(I)} w_i\Bigg)  
\end{equation}
can be easily seen from the way $\phi(I)$ is defined.

It remains to show that $\phi$ is a 1-1 correspondence between the most probable stable models of~$\Pi$ and the optimal stable models of ${\sf lpmln2asp^{rwd}}(\Pi)$.
For any interpretation $I$ of ${\sf lpmln2asp^{rwd}}(\Pi)$, we use $Penalty_{\Pi}(I, l)$ to denote the total penalty it receives at level $l$ defined by weak constraints:
\[
Penalty_{\Pi}(I, l)=\sum_{\substack{:\sim {\tt sat}(i, w_i, {\bf c}).[-w^\prime_i@ l, i, {\bf c}]\in WC(\Pi),\\  I\models  {\tt sat}(i, w_i, {\bf c})}} -w_i
\]
Let $\phi(I)$ be a stable model of ${\sf lpmln2asp^{rwd}}(\Pi)$. By Lemma \ref{lem:lpmln2asp_rwd}, $I\in \sm[\Pi]$. So it is sufficient to prove
\begin{align}
\nonumber
&I\in \underset{J:J\in \underset{K:K\in \sm[\Pi]}{\argmax} W_{\Pi^{\rm hard}}(K)}{\argmax} W_{\Pi^{\rm soft}}(J)  \\
\text{ iff} \\
&\nonumber \phi(I)\in  \underset{J^\prime:J^\prime\in\underset{K^\prime: \substack{\text{$K^\prime$ is a stable model of}\\ {\sf lpmln2asp^{\rm rwd}}(\Pi)}}{\argmin} Penalty_{\sf lpmln2asp^{rwd}(\Pi)}(K^\prime, 1)}{\argmin} Penalty_{\sf lpmln2asp^{rwd}(\Pi)}(J^\prime, 0).
\end{align}

This is true because (we abbreviate $Head_i({\bf c})\leftarrow Body_i({\bf c})$ as $F_i({\bf c})$)
\[
\ba {l l}
I\in & \argmax_{J:~J \in \argmax_{K:~K \in {\rm SM}[\Pi]} W_{\Pi^{\rm hard}}(K)} W_{\Pi^{\rm soft}}(J) \\

\text{iff} &   \\

I\in &\argmax_{J:~J \in \argmax_{K:~K \in {\rm SM}[\Pi]} exp\big(\sum\limits_{\alpha:F_i({\bf c})\;\in\; (\Pi^{\rm hard})_{K}} \alpha\big)} exp\Big(\sum\limits_{w_i:F_i({\bf c})\;\in\; (\Pi^{\rm soft})_{J}} w_i\Big)\\

\text{iff} & \\

I\in & \argmax_{J:~J \in \argmax_{K:~K \in {\rm SM}[\Pi]} exp\big(\sum\limits_{\alpha:F_i({\bf c})\;\in\; \Pi^{\rm hard}, K\models  F_i({\bf c})} 1\big)} exp\Big(\sum\limits_{w_i:F_i({\bf c})\;\in\; \Pi^{\rm soft}, J\models  F_i({\bf c})} w_i\Big)\\

\text{iff} & \\

I\in & \argmin_{J:~J \in \argmin_{K:~K \in {\rm SM}[\Pi]} \big(\sum\limits_{\alpha:F_i({\bf c})\;\in\; \Pi^{\rm hard}, K\models  F_i({\bf c})} -1\big)} \Big(\sum\limits_{w_i:F_i({\bf c})\;\in\; \Pi^{\rm soft}, J\models  F_i({\bf c})} -w_i\Big)\\

\text{iff} & \text{(by Lemma \ref{lem:lpmln2asp_rwd} and by definition of $\phi(I)$)}\\

\phi(I)\in & \argmin_{J':~J' \in \argmin_{\substack{K':~K' \text{ is a stable model of } \\{\sf lpmln2asp^{\rm rwd}}(\Pi)}} \bigg(\sum\limits_{\substack{:\sim {\tt sat}(i, w_i, {\bf c}).[-1@ 1, i, {\bf c}] \\\in\ {\sf lpmln2asp^{\rm rwd}}(\Pi),\\ K'\models  {\tt sat}(i, w_i, {\bf c})}} -1\bigg)} \bigg(\sum\limits_{\substack{:\sim {\tt sat}(i, w_i, {\bf c}).[-w_i@ 0, i, {\bf c}]\\ \in\ {\sf lpmln2asp^{\rm rwd}}(\Pi),\\ J'\models  {\tt sat}(i, w_i, {\bf c})}} -w_i\bigg)\\

\text{iff} & \\

\phi(I)\in & \argmin_{J':~J' \in \argmin_{\substack{K':~K' \text{ is a stable model of } \\{\sf lpmln2asp^{\rm rwd}}(\Pi)}} Penalty_{{\sf lpmln2asp^{\rm rwd}}(\Pi)}(K',1)} Penalty_{{\sf lpmln2asp^{\rm rwd}}(\Pi)}(J',0).
\ea
\]

\end{proof}

\BOCCC
\section{Proof of Proposition \ref{prop:clingo-simplification}} 
\noindent{\bf Proposition~\ref{prop:clingo-simplification} \optional{prop:clingo-simplification}}\
\ 
{\sl
The probability of any interpretation $I$ defined by (\ref{eq:wgt-unsat}) remains the same  if we replace ${\sf lpmln2asp^{pnt}}(\Pi)$ with ${\sf lpmln2asp^{pnt}}(\Pi)\setminus AllTrue$.
\BOCC
The probability of any interpretation $I$ computed by (\ref{eq:wgt-unsat}) does not change if we replace $lpmln2asp^{pnt}(\Pi)$ with $lpmln2asp^{pnt}(\Pi)\setminus REDUNDT$, (with all the unsat atoms that occurs in $REDUNDT$ only set to {\sc false}),  more specifically,
\[
exp \Bigg(-\sum_{\substack{{\tt unsat}(i, w_i, {\bf c}) \in \phi(I)\\\text{${\tt unsat}(i, w_i, {\bf c})$ occurs in $lpmln2asp^{pnt}(\Pi)$}}} w_i\Bigg) = exp \Bigg(-\sum_{\substack{{\tt unsat}(i, w_i, {\bf c}) \in \phi(I)\\\text{${\tt unsat}(i, w_i, {\bf c})$ occurs in $lpmln2asp^{pnt}(\Pi)\setminus REDUNDT$}}} w_i\Bigg).
\]
\EOCC

}

\BOCC
\begin{proof}
For any stable model $I$ of $lpmln2asp^{pnt}(\Pi)$, consider the following two sets
\begin{align}
\label{eq:unsat-set1} &\{{\tt unsat}(i, w_i, {\bf c}) \mid {\tt unsat}(i, w_i, {\bf c})\notin I, {\text{${\tt unsat}(i, w_i, {\bf c})$ occurs in $lpmln2asp^{pnt}(\Pi)\setminus REDUNDT$}} \}\\
\nonumber &\text{and}\\
\label{eq:unsat-set2} &\{{\tt unsat}(i, w_i, {\bf c}) \mid {\tt unsat}(i, w_i, {\bf c})\notin I, {\text{${\tt unsat}(i, w_i, {\bf c})$ occurs in $lpmln2asp^{pnt}(\Pi)$}} \}
\end{align}
Clearly, we have
\begin{align}
\nonumber &\{{\tt unsat}(i, w_i, {\bf c}) \mid {\tt unsat}(i, w_i, {\bf c})\notin I, {\text{${\tt unsat}(i, w_i, {\bf c})$ occurs in $lpmln2asp^{pnt}(\Pi)\setminus REDUNDT$}} \}\\
\nonumber &\subseteq\\
\nonumber &\{{\tt unsat}(i, w_i, {\bf c}) \mid {\tt unsat}(i, w_i, {\bf c})\notin I, {\text{${\tt unsat}(i, w_i, {\bf c})$ occurs in $lpmln2asp^{pnt}(\Pi)$}} \}
\end{align}
Suppose ${\tt unsat}(i, w_i, {\bf c})$ is in (\ref{eq:unsat-set2}) but not in (\ref{eq:unsat-set1}). This mean the three rules where ${\tt unsat}(i, w_i, {\bf c})$ occurs
\[
\ba {rcl}
  {\tt unsat}(i, w_i, {\bf c}) & \ar & Body_i, {\tt not}\ Head_i\\
  Head_i  & \ar &  Body_i, {\tt not}\ {\tt unsat}(i, w_i, {\bf c})\\
           &:\sim & {\tt unsat}(i, w_i, {\bf x}). \ \ \ [w_i@l, i, {\bf c}]
\ea
\]
are in $REDUNDT$, which further implies that every stable model $J$ of $lpmln2asp^{pnt}(\Pi)$ satisfies the three rules. Since ${\tt unsat}(i, w_i, {\bf c})\notin I$, the only possiblility is $J\not\models   Body_i, {\tt not}\ Head_i$, i.e., $J$ satisfies $R_I$. Consequently, we have
\begin{align}
\nonumber 
&\{{\tt unsat}(i, w_i, {\bf c}) \mid {\tt unsat}(i, w_i, {\bf c})\notin I, {\text{${\tt unsat}(i, w_i, {\bf c})$ occurs in $lpmln2asp^{pnt}(\Pi)$}} \}\\
\nonumber 
&=\\
\nonumber
&\{{\tt unsat}(i, w_i, {\bf c}) \mid {\tt unsat}(i, w_i, {\bf c})\notin I, {\text{${\tt unsat}(i, w_i, {\bf c})$ occurs in $lpmln2asp^{pnt}(\Pi)\setminus REDUNDT$}} \}\cup\\
\nonumber &\{R_i\in\Pi\mid\text{$R_i$ is satisfied by all stable model of $\Pi$ and $unsat(i,w_i,{\bf c})$ only occurs in $REDUNDT$}\}
\end{align}
Clearly,
\[
\{{\tt unsat}(i, w_i, {\bf c}) \mid {\tt unsat}(i, w_i, {\bf c})\notin I, {\text{${\tt unsat}(i, w_i, {\bf c})$ occurs in $lpmln2asp^{pnt}(\Pi)\setminus REDUNDT$}} \}
\]
and 
\[
\{R_i\in\Pi\mid\text{$R_i$ is satisfied by all stable model of $\Pi$ and $unsat(i,w_i,{\bf c})$ only occurs in $REDUNDT$}\}
\]
has empty intersection. So we have
\[
exp \Bigg(\sum_{\substack{{\tt unsat}(i, w_i, {\bf c}) \notin I\\\text{${\tt unsat}(i, w_i, {\bf c})$ occurs in $lpmln2asp^{pnt}(\Pi)$}}} w_i\Bigg) = k\cdot exp \Bigg(\sum_{\substack{{\tt unsat}(i, w_i, {\bf c}) \notin I\\\text{${\tt unsat}(i, w_i, {\bf c})$ occurs in $lpmln2asp^{pnt}(\Pi)\setminus REDUNDT$}}} w_i\Bigg)
\]
\end{proof}
\EOCC

\begin{proof}
For any stable model $I$ of $lpmln2asp^{pnt}(\Pi)$, consider the following two sets
\begin{align}
\label{eq:unsat-set1} &\{{\tt unsat}(i, w_i, {\bf c}) \mid {\tt unsat}(i, w_i, {\bf c})\in I, {\text{${\tt unsat}(i, w_i, {\bf c})$ occurs in $lpmln2asp^{pnt}(\Pi)\setminus REDUNDT$}} \}\\
\nonumber &\text{and}\\
\label{eq:unsat-set2} &\{{\tt unsat}(i, w_i, {\bf c}) \mid {\tt unsat}(i, w_i, {\bf c})\in I, {\text{${\tt unsat}(i, w_i, {\bf c})$ occurs in $lpmln2asp^{pnt}(\Pi)$}} \}
\end{align}
Clearly, we have
\begin{align}
\nonumber &\{{\tt unsat}(i, w_i, {\bf c}) \mid {\tt unsat}(i, w_i, {\bf c})\in I, {\text{${\tt unsat}(i, w_i, {\bf c})$ occurs in $lpmln2asp^{pnt}(\Pi)\setminus REDUNDT$}} \}\\
\nonumber &\subseteq\\
\nonumber &\{{\tt unsat}(i, w_i, {\bf c}) \mid {\tt unsat}(i, w_i, {\bf c})\in I, {\text{${\tt unsat}(i, w_i, {\bf c})$ occurs in $lpmln2asp^{pnt}(\Pi)$}} \}
\end{align}
Suppose ${\tt unsat}(i, w_i, {\bf c})$ is in (\ref{eq:unsat-set2}) but not in (\ref{eq:unsat-set1}). This mean the three rules where ${\tt unsat}(i, w_i, {\bf c})$ occurs
\[
\ba {rcl}
  {\tt unsat}(i, w_i, {\bf c}) & \ar & Body_i, {\tt not}\ Head_i\\
  Head_i  & \ar &  Body_i, {\tt not}\ {\tt unsat}(i, w_i, {\bf c})\\
           &:\sim & {\tt unsat}(i, w_i, {\bf x}). \ \ \ [w_i@l, i, {\bf c}]
\ea
\]
are in $REDUNDT$, which further implies that every stable model $J$ of $lpmln2asp^{pnt}(\Pi)$ satisfies the three rules, and thus ${\tt unsat}\notin J$ (due to the last rule). This contradicts the fact that ${\tt unsat}\in I$. So there does not exist any ${\tt unsat}(i, w_i, {\bf c})$ that is in (\ref{eq:unsat-set2}) but not in (\ref{eq:unsat-set1}), and consequently,
\begin{align}
\nonumber &\{{\tt unsat}(i, w_i, {\bf c}) \mid {\tt unsat}(i, w_i, {\bf c})\in I, {\text{${\tt unsat}(i, w_i, {\bf c})$ occurs in $lpmln2asp^{pnt}(\Pi)\setminus REDUNDT$}} \}\\
\nonumber &=\\
\nonumber &\{{\tt unsat}(i, w_i, {\bf c}) \mid {\tt unsat}(i, w_i, {\bf c})\in I, {\text{${\tt unsat}(i, w_i, {\bf c})$ occurs in $lpmln2asp^{pnt}(\Pi)$}} \}.
\end{align}
So
\[
exp \Bigg(-\sum_{\substack{{\tt unsat}(i, w_i, {\bf c}) \in I\\\text{${\tt unsat}(i, w_i, {\bf c})$ occurs in $lpmln2asp^{pnt}(\Pi)$}}} w_i\Bigg) = exp \Bigg(-\sum_{\substack{{\tt unsat}(i, w_i, {\bf c}) \in I\\\text{${\tt unsat}(i, w_i, {\bf c})$ occurs in $lpmln2asp^{pnt}(\Pi)\setminus REDUNDT$}}} w_i\Bigg).
\]
\end{proof}
\EOCCC

\section{Proof of Proposition~\ref{prop:mln-aux}} 

For any MLN $\ML$ and any interpretation $I$, we define
\[
W^\prime_{\ML}(I) =
\begin{cases}
exp(\sum_{w:F\in\ML^{soft},I\vDash F)}w) & \text{if $I\vDash \overline{\ML^{hard}}$}\\
0 & \text{otherwise.}
\end{cases}
\]
\begin{lemma}\label{lem:mln-soft}
For any MLN $\ML$ such that $\overline{\ML^{hard}}$ has at least one model, we have
\[
P_{\ML}(I) = \frac{W^\prime_{\ML}(I)}{\sum_{J}W^\prime_{\ML}(J)}
\]
for any interpretation $I$.
\end{lemma}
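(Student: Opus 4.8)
The plan is to make the role of the infinite weight $\alpha$ explicit and then pass to the limit. First I would note that for any MLN $\ML$ and any interpretation $I$, the unnormalized MLN weight factors as
\[
  W_\ML(I) = \exp\bigl(h(I)\,\alpha\bigr)\cdot s(I),
\]
where $h(I)$ is the number of hard formulas of $\ML$ satisfied by $I$ and $s(I)=\exp\bigl(\sum_{w:F\in\ML^{soft},\,I\models F}w\bigr)$ is the purely soft contribution; observe that $s(I)>0$ always. Let $n=|\ML^{hard}|$, so that $h(I)\le n$, with equality exactly when $I\models\overline{\ML^{hard}}$, i.e. exactly when $W'_\ML(I)=s(I)$ rather than $0$.

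Next I would write the normalized weight before taking the limit and divide numerator and denominator by $e^{n\alpha}$:
\[
  \frac{W_\ML(I)}{\sum_J W_\ML(J)}
  = \frac{e^{(h(I)-n)\alpha}\,s(I)}{\sum_J e^{(h(J)-n)\alpha}\,s(J)}.
\]
Then I would take $\alpha\to\infty$. Each exponent $h(J)-n$ is $\le 0$, so the $J$-th term in the denominator tends to $s(J)$ if $h(J)=n$ and to $0$ if $h(J)<n$; since $\overline{\ML^{hard}}$ has at least one model, some $J$ has $h(J)=n$, so the denominator tends to the strictly positive quantity $\sum_{J:\,h(J)=n}s(J)=\sum_J W'_\ML(J)$. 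The numerator tends to $s(I)=W'_\ML(I)$ when $h(I)=n$ and to $0=W'_\ML(I)$ when $h(I)<n$. In either case the limit is $W'_\ML(I)\big/\sum_J W'_\ML(J)$, which is exactly the claimed value of $P_\ML(I)$.

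The main obstacle is the bookkeeping around the limit: one must ensure the denominator has a nonzero limit, and this is precisely where the hypothesis that $\overline{\ML^{hard}}$ is satisfiable is used; one must also check that the case $h(I)<n$, where the numerator collapses to $0$, is consistent with the definition $W'_\ML(I)=0$. Beyond that, everything is routine manipulation of exponentials, so the argument is short once the factorization $W_\ML(I)=e^{h(I)\alpha}s(I)$ is in place.
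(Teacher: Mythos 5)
Your argument is correct and is essentially the paper's own proof: both hinge on factoring the weight as $e^{h(I)\alpha}$ times the soft contribution, normalizing by $e^{n\alpha}$ with $n=|\ML^{hard}|$, and using the satisfiability of $\overline{\ML^{hard}}$ to keep the denominator's limit strictly positive. The only difference is presentational — you treat all interpretations uniformly in one limit computation (legitimate, since the sum over $J$ is finite), whereas the paper splits into the case $I\models\overline{\ML^{hard}}$ and the case $I\not\models\overline{\ML^{hard}}$, handling the latter by bounding $P_{\ML}(I)$ against a fixed model $K$ of the hard part.
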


\begin{proof}
\noindent
{\bf Case 1}: Suppose $I\vDash \overline{\ML^{hard}}$.
\begin{align}
\nonumber P_{\ML}(I) &= \underset{\alpha\to\infty}{lim}\frac{W_{\ML}(I)}{\sum_{J}W_{\ML}(J)}\\
\nonumber &= \underset{\alpha\to\infty}{lim}\frac{exp(\sum_{w:F\in \ML, I\vDash F}w)}{\sum_J exp(\sum_{w:F\in \ML, J\vDash F}w)}\\
\nonumber &=  \underset{\alpha\to\infty}{lim}\frac{exp(|\overline{\ML^{hard}}|\alpha)\cdot exp(\sum_{w:F\in \ML\setminus\ML^{hard}, I\vDash F}w)}{\sum_{J\vDash \overline{\ML^{hard}}}exp(|\overline{\ML^{hard}}|\alpha)\cdot exp(\sum_{w:F\in \ML\setminus\ML^{hard}, I\vDash F}w) + \sum_{J\nvDash \overline{\ML^{hard}}}exp(\sum_{w:F\in \ML, I\vDash F}w)}\\
\nonumber &= \underset{\alpha\to\infty}{lim}\frac{exp(\sum_{w:F\in \ML\setminus\ML^{hard}, I\vDash F}w)}{\sum_{J\vDash \overline{\ML^{hard}}}exp(\sum_{w:F\in \ML\setminus\ML^{hard}, I\vDash F}w) + \frac{1}{exp(|\overline{\ML^{hard}}|\alpha)} \sum_{J\nvDash \overline{\ML^{hard}}}exp(\sum_{w:F\in \ML, I\vDash F}w)}.
\end{align}
Since there is at least one hard formula in $\overline{\ML^{hard}}$ not satisfied by those $J$ that do not satisfy $\overline{\ML^{hard}}$, we have
\begin{align}
\nonumber &\frac{1}{exp(|\overline{\ML^{hard}}|\alpha)} \sum_{J\nvDash \overline{\ML^{hard}}}exp(\sum_{w:F\in \ML, I\vDash F}w) \\
\nonumber \leq &\frac{1}{exp(|\overline{\ML^{hard}}|\alpha)} \sum_{J\nvDash \overline{\ML^{hard}}}exp((|\overline{\ML^{hard}}|-1)\alpha + \sum_{w:F\in \ML\setminus\ML^{hard}, I\vDash F}w)\\
\nonumber &= \frac{1}{exp(\alpha)} \sum_{J\nvDash \overline{\ML^{hard}}}exp(\sum_{w:F\in \ML\setminus\ML^{hard}, I\vDash F}w).
\end{align}
This, along with the fact that $\sum_{J\nvDash \overline{\ML^{hard}}}exp(\sum_{w:F\in \ML\setminus\ML^{hard}, I\vDash F}w)$ does not contain $\alpha$, we have
\begin{align}
\nonumber P_{\ML}(I) &= \underset{\alpha\to\infty}{lim}\frac{exp(\sum_{w:F\in \ML\setminus\ML^{hard}, I\vDash F}w)}{\sum_{J\vDash \overline{\ML^{hard}}}exp(\sum_{w:F\in \ML\setminus\ML^{hard}, I\vDash F}w) + \frac{1}{exp(|\overline{\ML^{hard}}|\alpha)} \sum_{J\nvDash \overline{\ML^{hard}}}exp(\sum_{w:F\in \ML, I\vDash F}w)}\\
\nonumber &= \frac{exp(\sum_{w:F\in \ML\setminus\ML^{hard}, I\vDash F}w)}{\sum_{J\vDash \overline{\ML^{hard}}}exp(\sum_{w:F\in \ML\setminus\ML^{hard}, I\vDash F}w)}\\
\nonumber &= \frac{exp(\sum_{w:F\in \ML^{soft}, I\vDash F}w)}{\sum_{J\vDash \overline{\ML^{hard}}}exp(\sum_{w:F\in \ML^{soft}, I\vDash F}w)}\\
\nonumber &= \frac{W^\prime_{\ML}(I)}{\sum_{J}W^\prime_{\ML}(J)}.
\end{align}

\noindent
{\bf Case 2}: Suppose $I$ does not satisfy $\overline{\ML^{hard}}$. Let $K$ be an interpretation that satisfies $\overline{\ML^{hard}}$. We have
\begin{align}
\nonumber P_{\ML}(I) &= \underset{\alpha\to\infty}{lim}\frac{W_{\ML}(I)}{\sum_{J}W_{\ML}(J)}\\
\nonumber &= \underset{\alpha\to\infty}{lim}\frac{exp(\sum_{w:F\in \ML, I\vDash F}w)}{\sum_J exp(\sum_{w:F\in \ML, J\vDash F}w)}\\
\nonumber & \leq \underset{\alpha\to\infty}{lim}\frac{exp(\sum_{w:F\in \ML, I\vDash F}w)}{exp(\sum_{w:F\in \ML, K\vDash F}w)}\\
\nonumber  & = \underset{\alpha\to\infty}{lim}\frac{exp(\sum_{w:F\in \ML, I\vDash F}w)}{exp(|\overline{\ML^{hard}}|\alpha)\cdot exp(\sum_{w:F\in \ML^{soft}, K\vDash F}w)}.
\end{align}
Since $I$ does not satisfy $\overline{\ML^{hard}}$, $I$ satisfies at most $\overline{\ML^{hard}} - 1$ hard formulas in $\overline{\ML^{hard}}$. So we have
\begin{align}
\nonumber P_{\ML}(I) &\leq\underset{\alpha\to\infty}{lim}\frac{exp(\sum_{w:F\in \ML, I\vDash F}w)}{exp(|\overline{\ML^{hard}}|\alpha)\cdot exp(\sum_{w:F\in \ML^{soft}, K\vDash F}w)}\\
\nonumber \leq & \underset{\alpha\to\infty}{lim}\frac{(exp(|\overline{\ML^{hard}} |-1)\alpha)\cdot exp(\sum_{w:F\in \ML^{soft}, I\vDash F}w)}{(exp(|\overline{\ML^{hard}}|\alpha )\cdot exp(\sum_{w:F\in \ML^{soft}, K\vDash F}w)}\\
\nonumber = & \underset{\alpha\to\infty}{lim}\frac{exp(\sum_{w:F\in \ML^{soft}, I\vDash F}w)}{exp(\alpha)\cdot exp(\sum_{w:F\in \ML^{soft}, K\vDash F}w)}\\
\nonumber = &\ 0.
\end{align}
So $ P_{\ML}(I) = 0$, which is equivalent to $\frac{W^\prime_{\ML}(I)}{\sum_{J}W^\prime_{\ML}(J)}$ as $W^\prime_{\ML}(I) = 0$.
\end{proof}

\BOCC
\begin{lemma}
\label{lem:mln-aux-ground}
Given any MLN program $\Pi$, let $F$ be a subformula that occurs in some formula in $\Pi$, and $\Pi^{Aux}_F$ be the MLN program obtained from $\Pi$ by replacing $F$ with $Aux_F$ and adding the formula
\[
\nonumber \alpha\ \ :\ \ Aux_F \leftrightarrow F.
\]
For any interpretation $I$ of $\Pi$, let $I^{Aux}_F$ be the interpretation defined as 
\begin{itemize}
\item $I^{Aux}_F(p)=I(p)$ for all $p$ that occurs in $\Pi$;
\item $I^{Aux}_F(Aux_F)=I(F)$.
\end{itemize}

We have
\[
P_{\Pi}(I) = P_{\Pi^{Aux}_F}(I^{Aux}_F)
\]
for any interpretation $I$.
\end{lemma}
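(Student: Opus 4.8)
The plan is to reduce the claim to the penalty-free characterization of MLN probabilities supplied by Lemma~\ref{lem:mln-soft}, and then to set up a weight-preserving bijection between the models of the hard part of $\ML$ and those of the hard part of $\ML^F_{Aux}$.

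First I would verify that Lemma~\ref{lem:mln-soft} is applicable to $\ML^F_{Aux}$, i.e., that $\overline{(\ML^F_{Aux})^{hard}}$ has at least one model. Taking any model $K$ of $\overline{\ML^{hard}}$, its extension $K_{Aux}$ (defined exactly as $I_{Aux}$ is, with $K$ in place of $I$) satisfies the new hard formula $Aux({\bf x})\leftrightarrow F({\bf x})$ by construction, and satisfies every remaining hard formula of $\ML^F_{Aux}$ — each obtained from a hard formula of $\ML$ by replacing $F$ with $Aux$ — because $K_{Aux}$ assigns to each ground instance $Aux({\bf c})$ precisely the truth value $(F({\bf c}))^K$. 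Hence $K_{Aux}\models\overline{(\ML^F_{Aux})^{hard}}$, and Lemma~\ref{lem:mln-soft} applies to both MLNs, so $P_{\ML}(I)=W'_{\ML}(I)/\sum_J W'_{\ML}(J)$ and $P_{\ML^F_{Aux}}(I_{Aux})=W'_{\ML^F_{Aux}}(I_{Aux})/\sum_{J'} W'_{\ML^F_{Aux}}(J')$, where $W'$ is as defined just before Lemma~\ref{lem:mln-soft}.

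Next comes the core step, a substitution lemma: for every interpretation $I$ over $\sigma$ and every ground formula $G$ arising from $\ML$, writing $G'$ for the result of replacing the relevant occurrences of $F$ by $Aux$, we have $I\models G$ iff $I_{Aux}\models G'$; this follows by a routine structural induction from the defining equation $I_{Aux}(Aux({\bf c}))=(F({\bf c}))^I$. Two consequences: (i) $I\models\overline{\ML^{hard}}$ iff $I_{Aux}\models\overline{(\ML^F_{Aux})^{hard}}$, since the extra hard formula holds automatically under $I_{Aux}$; and (ii) the soft weights match, so $W'_{\ML}(I)=W'_{\ML^F_{Aux}}(I_{Aux})$ when $I\models\overline{\ML^{hard}}$, while both sides are $0$ otherwise. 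I would also record that $I\mapsto I_{Aux}$ is a bijection from the models of $\overline{\ML^{hard}}$ onto the models of $\overline{(\ML^F_{Aux})^{hard}}$: injectivity is immediate because $I_{Aux}$ restricted to $\sigma$ is $I$, and surjectivity holds because any model $J$ of $\overline{(\ML^F_{Aux})^{hard}}$ satisfies $Aux({\bf x})\leftrightarrow F({\bf x})$, which forces $J=(J|_\sigma)_{Aux}$, and then (i) gives $J|_\sigma\models\overline{\ML^{hard}}$.

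Finally I would assemble the pieces. The numerators of the two probability expressions agree by consequence (ii). For the denominators, only models of $\overline{(\ML^F_{Aux})^{hard}}$ contribute nonzero terms to $\sum_{J'} W'_{\ML^F_{Aux}}(J')$, and by the bijection these are exactly the interpretations $I_{Aux}$ with $I\models\overline{\ML^{hard}}$, each contributing $W'_{\ML}(I)$; hence $\sum_{J'} W'_{\ML^F_{Aux}}(J')=\sum_J W'_{\ML}(J)$, and the two probabilities coincide. I expect the only point needing a little care to be the non-ground bookkeeping — reading ``replace $F({\bf x})$ by $Aux({\bf x})$'' at the level of all ground instances and phrasing the substitution lemma at the ground level — but this is routine; the genuine content of the argument lives in Lemma~\ref{lem:mln-soft} and in the bijection between the hard-part models.
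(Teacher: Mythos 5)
Your proposal is correct and follows essentially the same route as the paper's proof of Proposition~\ref{prop:mln-aux}: invoke Lemma~\ref{lem:mln-soft} for both MLNs and use the defining equation $I_{Aux}(Aux({\bf c}))=(F({\bf c}))^I$ to match numerators and denominators, with the case split on whether $I$ satisfies $\overline{\ML^{hard}}$. You are merely more explicit than the paper about the substitution induction, the bijection between hard-part models (including surjectivity via the $Aux\leftrightarrow F$ constraint), and the check that $\overline{(\ML^F_{Aux})^{hard}}$ is satisfiable, all of which the paper leaves implicit in the phrase ``from the way $I_{Aux}$ is defined.''
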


\EOCC

\bigskip
\noindent{\bf Proposition~\ref{prop:mln-aux} \optional{prop:mln-aux}}\
\ 
{
For any MLN $\ML$ of signature $\sigma$, let $F({\bf x})$ be a subformula of some formula in $\ML$ where ${\bf x}$ is the list of all free variables of $F({\bf x})$, and let $\ML^F_{Aux}$ be the MLN program obtained from $\ML$ by replacing $F({\bf x})$ with a new predicate $Aux({\bf x})$ and adding the formula
\[
  \alpha\ \ :\ \  Aux({\bf x}) \leftrightarrow F({\bf x}).
\]
For any interpretation $I$ of $\ML$, let $I_{Aux}$ be the extension of $I$ of signature $\sigma\cup \{Aux\}$ defined by $I_{Aux}(Aux({\bf c}))=(F({\bf c}))^I$ for every list ${\bf c}$ of elements in the Herbrand universe.
%
When $\overline{\ML^{hard}}$ has at least one model, we have 
\[
    P_{\ML}(I) = P_{\ML^F_{Aux}}(I_{Aux}).
\]
}

\begin{proof}
For any formula $G$, let $G^F_{Aux}$ be the formulas obtained from $G$ by replacing subformulas $F({\bf x})$ with $Aux_F({\bf x})$.
According to Lemma \ref{lem:mln-soft}, we have
\begin{align}
\nonumber P_{\ML}(I) &= \frac{W^\prime_{\ML}(I)}{\sum_{J}W^\prime_{\ML}(J)}.
\end{align}
{\bf Case 1:} Suppose $I$ satisfies $\overline{\ML^{hard}}$. Then we have
\begin{align}
\nonumber P_{\ML}(I) &= \frac{exp(\sum_{w:G\in\ML^{soft},I\vDash F}w)}{\sum_{J\vDash\overline{\ML^{hard}}}exp(\sum_{w:G\in\ML^{soft},J\vDash F}w)}.
\end{align}

From the way $I_{Aux}$ is defined, we have
\begin{align}
\nonumber P_{\ML}(I) &= \frac{exp(\sum_{w:G^{F}_{Aux}\in(\ML^{F}_{Aux})^{soft},I_{Aux}\vDash G^{F}_{Aux}}w)}{\sum_{J_{Aux}\vDash\overline{(\ML^{F}_{Aux})^{hard}}}exp(\sum_{w:G^{F}_{Aux}\in(\ML^{F}_{Aux})^{soft},J_{Aux}\vDash G^{F}_{Aux}}w)}\\
\nonumber &= P_{\ML^{F}_{Aux}}(I_{Aux}).
\end{align}

{\bf Case 2:} Suppose $I$ does not satisfy $G\in\overline{\ML^{hard}}$. From the way $I_{Aux}$ is defined, $I_{Aux}$ does not satisfy $G^{F}_{Aux}\in\overline{(\ML^{F}_{Aux})^{hard}}$. So $W^\prime_{\ML}(I) = W^\prime_{\ML^{F}_{Aux}}(I_{Aux}) = 0$ and thus $P_{\ML}(I) = P_{\ML^{F}_{Aux}}(I_{Aux}) = 0$.

\BOCC
For any formula $G$, let $G^{Aux}_F$ be the formulas obtained from $G$ by replacing subformulas $F({\bf x})$ with $Aux_F({\bf x})$. For every list ${\bf c}$ of element in the Herbrand universe, since $I^{Aux}_F(Aux_F({\bf c}))$ is defined as $(F({\bf c}))^I$, $I^{Aux}_F$ satisfies $Aux_F({\bf c}) \leftrightarrow F({\bf c})$. For a formula $G_i$, let $n_i$ be the number of its ground instances. Let $k$ denote the number of ground instances of $Aux_F({\bf c}) \leftrightarrow F({\bf c})$.
\begin{align}
\nonumber P_{\ML^{Aux}_{F}}(I^{Aux}_F) &= \underset{\alpha\to\infty}{lim} \frac{exp(\sum_{w_i:G_i\in \ML^{Aux}_F}n_iw_i)}{\sum_{J}exp(\sum_{w_i:G_i\in \ML^{Aux}_F}n_iw_i)}\\
\nonumber &= \underset{\alpha\to\infty}{lim}exp\frac{exp(k\alpha)\cdot exp(\sum_{w_i:G_i\in \ML}n_iw_i)}{\sum_{J\models  \forall {\bf x}(Aux_F({\bf x}) \leftrightarrow F({\bf x}))}exp(\sum_{w_i:G_i\in \ML^{Aux}_F}n_iw_i) + \sum_{J\not\models  \forall {\bf x} (Aux_F({\bf x}) \leftrightarrow F(\bf x))}exp(\sum_{w_i:G_i\in \ML^{Aux}_F}n_iw_i)}\\
\nonumber &= \underset{\alpha\to\infty}{lim}\frac{exp(\sum_{w_i:G_i\in \ML}n_iw_i)}{\sum_{J\models  \forall {\bf x}(Aux_F({\bf x}) \leftrightarrow F({\bf x}))}exp(\sum_{w_i:G_i\in \ML}n_iw_i) + \frac{1}{exp(k\alpha)}\sum_{J\not\models  \forall {\bf x}(Aux_F({\bf x}) \leftrightarrow F({\bf x}))}exp(\sum_{w_i:G_i\in \ML^{Aux}_F}n_iw_i)}
\end{align}
Note that the second term of the denominator
\begin{align}
\nonumber & \frac{1}{exp(k\alpha)}\sum_{J\not\models  \forall {\bf x}(Aux_F({\bf x}) \leftrightarrow F({\bf x}))}exp(\sum_{w_i:G_i\in \ML^{Aux}_F}n_iw_i) \\
\nonumber = &\sum_{J\not\models  \forall {\bf x}(Aux_F({\bf x}) \leftrightarrow F({\bf x}))}exp(\sum_{w_i:G_i\in \ML^{Aux}_F}n_iw_i - k\alpha) 
\end{align}
Since for those $J$ that do not satisfy $\forall {\bf x}(Aux_F({\bf x}) \leftrightarrow F({\bf x}))$, there is at least one ground instance of $Aux_F({\bf x}) \leftrightarrow F({\bf x})$ not satisfied by $J$, we have
\begin{align}
\nonumber &\sum_{J\not\models  \forall {\bf x}(Aux_F({\bf x}) \leftrightarrow F({\bf x}))}exp(\sum_{w_i:G_i\in \ML^{Aux}_F}n_iw_i - k\alpha)\\
\nonumber \leq&\sum_{J\not\models  \forall {\bf x}(Aux_F({\bf x}) \leftrightarrow F({\bf x}))}exp(\sum_{w_i:G_i\in \ML}n_iw_i + (k-1)\alpha - k\alpha)\\
\nonumber = &\sum_{J\not\models  \forall {\bf x}(Aux_F({\bf x}) \leftrightarrow F({\bf x}))}exp(\sum_{w_i:G_i\in \ML}n_iw_i - \alpha)\\
\nonumber = &\frac{1}{\alpha}\sum_{J\not\models  \forall {\bf x}(Aux_F({\bf x}) \leftrightarrow F({\bf x}))}exp(\sum_{w_i:G_i\in \ML}n_iw_i )
\end{align}
So
\begin{align}
\nonumber P_{\ML^{Aux}_{F}}(I^{Aux}_F) &= \underset{\alpha\to\infty}{lim}\frac{exp(\sum_{w_i:G_i\in \ML}n_iw_i)}{\sum_{J\models \forall {\bf x} Aux_F({\bf x}) \leftrightarrow F({\bf x})}exp(\sum_{w_i:G_i\in \ML}n_iw_i)}\\
\nonumber &= \underset{\alpha\to\infty}{lim}\frac{exp(\sum_{w_i:G_i\in \ML}n_iw_i)}{\sum_{J}exp(\sum_{ w_i:G_i\in\ML}n_iw_i)}\\
\nonumber &=P_{\ML}(I).
\end{align}
\EOCC
\end{proof}

\section{More Experiments}

\BOCC
\subsection{Maximal Relaxed Clique}

We experiment on the problem of finding a maximal relaxed clique in a graph.  The goal is, given a graph of connected nodes, select as many nodes as possible in the graph to create a subgraph. In the subgraph, we assign a reward to every pair of connected nodes and a reward for every node included in the subgraph. The reward of the subgraph determine how much ``relaxed'' the clique is. A \emph{maximal relaxed clique} is a subgraph that maximizes the reward that can be given to a subgraph.

The \la encoding of the above problem is
\begin{lstlisting}
{in(X)} :- node(X).
disconnected(X, Y) :- in(X), in(Y), not edge(X, Y).
5  :- not in(X), node(X).
5  :- disconnected(X, Y).
\end{lstlisting}

Rule 1 states that every node$X$ can be {\bf in} the subgraph. Rule 2 states that a pair of {\bf nodes} is disconnected if it is {\bf not in} the subgraph and there is no {\bf edge} between those two. Rule 3 states that If a node is {\bf in} the subgraph, we give it a reward of 5. Given an interpretation $I$, a pair of nodes can be {\bf disconnected} if two nodes $X$ and $Y$ are {\bf in} $I$ and there is not edge between them. Rule 4 states that if two nodes $X_i$ and $Y_i$ are not {\bf disconnected} in an interpretation $I$ we give a reward of 5 to $I$. 

Similarly the \lm encoding of the above problem is
\begin{lstlisting}
NodeSet = {1}
In(NodeSet)
Node(NodeSet)
Edge(NodeSet, NodeSet)
Disconnected(NodeSet, NodeSet)

{In(x)} <= Node(x).
Disconnected(x, y) <= In(x) ^ In(y) ^ !Edge(x, y).
5  <= !In(x) ^ Node(x)
5  <= Disconnected(x, y)
\end{lstlisting}
We declare the sort \lstinline|NodeSet| containing just $1$ node. Additional nodes can be present in the evidence file.

\bex {Consider the graph and its \la encoding as given above.
\begin{figure}[h!]
	\centering
	\includegraphics{mrc-ex.png}
	\label{fig:mrc-ex}
	\caption{Maximal Relaxed Clique Example}
\end{figure}
}

Consider an interpretation $I = \{in(1), in(2), in(3)\}$. For this interpretation, the reward under the \lp semantics is 95 given by $(w_3 * 3) + (w_4 * 16) = 95$ where $w_i$ represents the weight of the $t^{th}$ rule in the \la encoding. Consider an interpretation $I = \{in(1), in(2), in(3), in(4)\}$. For this interpretation, the reward under the \lp semantics is $(w_3 * 4) + (w_4 * 14) = 90$. The reward in the latter case is lesser even though all nodes are included since it has two pair of disconnected nodes: $(1,4)$ and $(4,1)$. Since the interpretation $I = \{in(1), in(2), in(3)\}$ results in the maximum reward, $I$ is the maximal relaxed clique. Note that $I = \{in(2), in(3), in(4)\}$ is another maximal relaxed clique of the same graph.

For this experiment, we generate a graph by randomly generating edges between nodes with probability $\{0.5,  0.8,  0.9,  1\}$ and different number of nodes $\{10,20,50,100,200,300,$ $400,500\}$ at each probability. For each problem instance, we perform MAP inferences to find maximal relaxed cliques with both {\sc lpmln2asp} and {\sc lpmln2mln}. The timeout is set to 20 minutes. The experiments are performed on a machine powered by 4 Intel(R) Core(TM) i5-2400 CPU with OS Ubuntu 14.04 LTS and 8G memory.

\renewcommand{\arraystretch}{0.8}
\begin{center}
		\begin{longtable}{| c | c | c | c | c |}
			\hline
	{\bf Instance} & \makecell{\la \\ {\bf w.} \cli \ {\bf 4.5}} & \makecell{\lm \\ {\bf w.} \al \ {\bf 2}} & \makecell{\lm \\ {\bf w.} \tu \ {\bf 0.3}} & \makecell{\lm \\ {\bf w.} \ro \ {\bf 0.5}}\\
			\hline
			\hline
				p50n10\tablefootnote{pXnY is a graph with probability X and \# of nodes Y} & \btc{0.004\tablefootnote{Grounding time of clingo is not reported since it is negligible}}{205}{239} & \btm{0.11}{2.84\tablefootnote{grounding time + solving time}}{310}{788\tablefootnote{\# of atoms/\# of clauses in \al}} & \btm{3}{6.083\tablefootnote{grounding time + solving time}}{310}{786\tablefootnote{\# of atoms/\# of clauses in \tu}} & \btr{0.693\tablefootnote{grounding time + solving time}}{71\tablefootnote{\# of atoms}}\\
			\hline
			p50n20 & \btc{0.017}{769}{887} & \btm{0.34}{4.85}{1,220}{3,136} & \btm{3}{27.345}{1,220}{3,132} & \btr{0.884}{242}\\
			\hline
			p50n50 & \btc{Timeout}{4,511}{5,185} & \btm{2.5}{11.18}{7,550}{19,466} & \btm{3}{Timeout}{7,550}{19,438} & \btr{1.046}{1320}\\
			\hline
			p50n100 & \btc{Timeout}{17,744}{20,294} & \btm{17.01}{36.88}{30,100}{77,643} & \btm{5}{Timeout}{30,100}{77,593} & \btr{1.539}{5229}\\
			\hline
			p50n200 & \btc{Timeout}{70,570}{80,696} & \btm{154.75}{235.62}{120,200}{310,395} & \btm{13}{Timeout}{120,200}{310,382} & \btr{6.903}{20,376}\\
			\hline
			p50n300 & \btc{Timeout}{135,924}{181,247} & \btm{564.62}{799.61}{270,300}{698,022} & \btm{36}{Timeout}{270,300}{697,868} & \btr{20.239}{45,501}\\
			\hline
			p50n400 & \btc{Timeout}{241,400}{321,999} & Timeout & Timeout & \btr{Timeout}{80,472}\\
			\hline
			p50n500 & \btc{Timeout}{376,7332}{502,465} & Timeout & Timeout & Timeout\\
			\hline
			\hline
			p80n10 & \btc{0.004}{219}{235} & \btm{0.1}{2.27}{310}{802} & \btm{3}{5.568}{310}{800}&\btr{0.755}{102}\\
			\hline
			p80n20 & \btc{0.014}{802}{808} & \btm{0.31}{3.4}{1,220}{3,169} & \btm{3}{19.001}{1,220}{3,165} & \btr{0.759}{366}\\
			\hline
			p80n50 & \btc{2.730}{4,739}{4,785} & \btm{2.45}{12.2}{7,550}{19,658} & \btm{3}{712.472}{7,550}{19,648} & \btr{1.11}{2,099}\\
			\hline
			p80n100 & \btc{Timeout}{18,717}{19,193} & \btm{17.35}{42.11}{30,100}{78,552} & \btm{5}{Timeout}{30,100}{78,534} & \btr{1.941}{8,261}\\
			\hline
			p80n200 & \btc{Timeout}{74,174}{75,913} & \btm{157.82}{261.28}{120,200}{313,845} & \btm{12}{Timeout}{120,200}{313,809} & \btr{179.944}{32,400}\\
			\hline
			p80n300 & \btc{Timeout}{108,969}{127,337} & \btm{564.6}{878.63}{270,300}{705,886} & \btm{30}{Timeout}{270,300}{705,831} & \btr{Timeout}{72,427}\\
			\hline
			p80n400 & \btc{Timeout}{193,151}{225,501} & Timeout & Timeout & \btr{Timeout}{129,097}\\
			\hline
			p80n500 & \btc{Timeout}{461,326}{471,714} & Timeout & Timeout & \btr{Timeout}{201,312}\\
 			\hline
			\hline
			p90n10 & \btc{0.004}{223}{233} & \btm{0.06}{0.07}{310}{802} & \btm{3}{3.967}{310}{802} & \btr{0.741}{109}\\
			\hline
			p90n20 & \btc{0.015}{824}{844} & \btm{0.28}{1.71}{1,220}{3,187} & \btm{3}{4.084}{1,220}{3,185} & \btr{0.806}{402}\\
			\hline
			p90n50 & \btc{0.961}{4,932}{5,000} & \btm{2.51}{11.53}{7,550}{19,833} & \btm{3}{457.61}{7,550}{19,832} & \btr{1.129}{2,350}\\
			\hline
			p90n100 & \btc{Timeout}{19,464}{19,682} & \btm{17.34}{42.02}{30,100}{79,281} & \btm{5}{Timeout}{30,100}{79,272}& \btr{2.875}{9,215}\\
			\hline
			p90n200 & \btc{Timeout}{76,909}{77,467} & \btm{160.18}{267.17}{120,200}{316,546} & \btm{20}{Timeout}{120,200}{316,527}& \btr{278.734}{36,482}\\
			\hline
			p90n300 & \btc{Timeout}{100,051}{109,501} & \btm{567.73}{928.94}{270,300}{712,354} & \btm{34}{Timeout}{270,300}{712,324}& \btr{Timeout}{81,759}\\
			\hline
			p90n400 & \btc{Timeout}{177,147}{193,493} & Timeout & Timeout& \btr{Timeout}{144,864}\\
			\hline
			p90n500 & \btc{Timeout}{478,931}{481,977} & Timeout & Timeout & \btr{Timeout}{225,901}\\
			\hline
			\hline
			p100n10 & \btc{0.007}{131}{141} & \btm{0.1}{0.11}{310}{810} & \btm{3}{3.839}{310}{810}& \btr{0.588}{120}\\
			\hline
			p100n20 & \btc{0.013}{461}{481} & \btm{0.31}{0.35}{1,220}{3,220} & \btm{3}{4.166}{1,220}{3,220}& \btr{0.740}{440}\\
			\hline
			p100n50 & \btc{0.046}{2,651}{2,701} & \btm{2.52}{3.11}{7,550}{20,050} & \btm{3}{6.088}{7,550}{20,050}& \btr{0.955}{2,600}\\
			\hline
			p100n100 & \btc{0.141}{10,301}{104,01} & \btm{17.4}{24.19}{30,100}{80,100} & \btm{5}{14.438}{30,100}{80,100}& \btr{1.464}{10,200}\\
			\hline
			p100n200 & \btc{0.384}{40,601}{40,801} & \btm{159.96}{259.22}{120,200}{320,200} & \btm{15}{77.492}{120,200}{320,200}& \btr{2.084}{40,400}\\
			\hline
			p100n300 & \btc{0.804}{90,901}{91,201} & \btm{565.46}{921.05}{270,300}{720,300} & \btm{33}{257.52}{270,300}{720,300}& \btr{3.322}{90,600}\\
			\hline
			p100n400 & \btc{1.395}{161,201}{161,201} & Timeout & Timeout& \btr{4.772}{160,800}\\
			\hline
			p100n500 & \btc{2.175}{251,501}{252,001} & Timeout & Timeout& \btr{6.115}{251,000}\\
			\hline
			\hline
			\caption{Running Statistics on Finding Maximal Relaxed Clique (MAP Inference)} 
		\end{longtable}
	\end{center}

The table gives the results of running maximal relaxed clique with various graph instances on each of the four underlying solvers. The graph instances range from a small size of 10 nodes to a large size of 500 nodes. We compare the system based on the results of the experiment primarily on the performance of the respective solvers. Note that while \la and \lm with {\sc rockit} gives exact solutions, \lm with {\sc alchemy} and {\sc tuffy} may return sub-optimal solutions. The quality of answers for these solvers based on different parameters is discussed in the next experiment.

The naive grounding (the MRF creation time) of \al is a primary bottleneck for the solver. Even after the compact encoding based on Equation \eqref{eq:completion-tse} used in the translation for MLN solvers, it times out during grounding for $N>340$.  Solver {\sc tuffy} uses database for grounding and noticeably has better grounding times than {\sc alchemy} for most graph instances ignoring the constant time it takes for {\sc tuffy} to connect to {\sc postgres} database server. In spite of better grounding mechanisms than {\sc alchemy}, solver {\sc tuffy} times out while grounding with $N>370$ in our experiments. Although using database for optimizes the process in MLN solvers it is still not good enough when compared to the grounding process of {\sc clingo} and {\sc rockit}. Solvers {\sc clingo} uses {\sc gringo} for grounding {\sc rockit} uses {\sc mysql} in conjunction with {\sc gurobi} for grounding. Both {\sc clingo} and {\sc rockit} can ground all instances of the graph. The grounding time for {\sc alchemy} and {\sc tuffy} is comparable to solving time while it is negligible compared to solving time for {\sc clingo} and {\sc rockit}. 

Grounding time of all solvers constantly increases as the number of nodes increases. Interestingly, this increase in grounding time for bigger graph instances does not correlate with the increase in solving time. For MLN solvers {\sc alchemy} and {\sc tuffy} solving time increases constantly with graph size regardless of the sparsity of graph. A graph instance where all the nodes are connected $p = 1$ to each other, a fully connected graph, is solved much faster than all other instances by {\sc clingo} and {\sc rockit}. For \cli, the running time is sensitive to particular problem instance due to the exact optimization algorithm CDNL-OPT \cite{gebser11multi} used in {\sc clingo}. The non-deterministic nature of CDNL-OPT also brings randomness on the path through which an optimal solution is found, which makes the running time differ even among similar-sized problem instances, while in general, as the size of the graph increases, the search space gets larger, thus the running time increases. Both {\sc alchemy} and {\sc tuffy} use MaxWalkSat for MAP inference which allows the solver to return sub-optimal solutions. The approximate nature of the method allows relatively consistent running time for different problem instances, as long as parameters such as the maximum number of iterations/tries are fixed among all experiments. The running time was also not affected much by the edge probability. System {\sc rockit} uses Cutting Plane Inference (CPI) \cite{riedel2012improving} along with Cutting Plane Aggregation (CPA) \cite{noessner2013rockit} for inference. Using CPI, {\sc rockit} iteratively solves a partial version of the complete ground network based on the Cutting Plane approach. At each iteration, it checks for all the constraints that are unsatisfied and adds them to the ILP solver {\sc gurobi}. In a fully connected graph instance where all of the nodes are connected, the number of rules violated due to the last rule of the program is 0, and therefore, {\sc rockit} runs faster .

Performance wise \la outperforms \lm in a fully connected graph ($p=1$) and in all other instances \lm with {\sc rockit} clearly outperforms others. One factor that aids in {\sc rockit}'s performance is the internal solver {\sc gurobi}'s multi-core architecture. {\sc gurobi} uses all the cores available on the machine for computation while \cli, {\sc alchemy} and {\sc tuffy} all use a single core for computation. {\sc gurobi} is also the fastest commercial ILP solver according to some benchmark results\footnote{The benchmark results are available on the {\sc gurobi} website at www.gurobi.com}.
\EOCC

\subsection{Link Prediction in Biological Networks - Another Comparison with {\sc problog2} on a Real World Problem}

Public biological databases contain huge amounts of rich data, such as annotated sequences, proteins, genes and gene expressions, gene and protein interactions, scientific articles, and ontologies. Biomine \cite{eronen12biomine} is a system that integrates cross-references from several biological databases into a graph model with multiple types of edges. Edges are weighted based on their type, reliability, and informativeness.

We use graphs extracted from the Biomine network. The graphs are extracted around genes known to be connected to the Alzheimer's disease (HGNC ids 620, 582, 983, and 8744). A typical query on such a database of biological concepts is whether a given gene is connected to a given disease. In a probabilistic graph, the importance of the connection can be measured as the probability that a path exists between the two given nodes, assuming that each edge is true with the specified probability, and that edges are mutually independent \cite{sevon06link}. Nodes in the graph correspond to different concepts such as gene, protein, domain, phenotype, biological process, tissue, and edges connect related concepts. Such a program can be expressed in the language of {\sc problog2} as
\begin{lstlisting}
p(X,Y) :- drc(X,Y).
p(X,Y) :- drc(X, Z), Z \== Y, p(Z, Y).
\end{lstlisting}
The \la encoding for the same problem is
\begin{lstlisting}
p(X,Y) :- drc(X,Y).
p(X,Y) :- drc(X, Z), Z != Y, p(Z, Y).
\end{lstlisting}
The evidence file contains weighted edges \lstinline|drc/2| encoded as
\begin{lstlisting}
0.942915444848::drc('hgnc_983','pubmed_11749053').
0.492799999825::drc('pubmed_10075692','hgnc_620').
0.434774330065::drc('hgnc_620','pubmed_10460257').
...
\end{lstlisting}
The same evidence used for {\sc problog2} is processed to work with the syntax of \la as
\begin{lstlisting}
2.804443020124533 drc("hgnc_983","pubmed_11749053").
-0.028801991603851305 drc("pubmed_10075692","hgnc_620").
-0.26239795220008383 drc("hgnc_620","pubmed_10460257").
...
\end{lstlisting}
where a probability $p$ is turned into weight $ln(\frac{p}{1-p})$.
We test the systems on varying graph sizes ranging from 366 nodes, 363 edges to 3724 nodes, 23135 edges. The experiment was run on a 40 core Intel(R) Xeon(R) CPU E5-2640 v4 @ 2.40GHz machine with 128 GB of RAM. The timeout for the experiment was set to 20 minutes. 

\begin{table}\label{tab:biomin}
	\begin{longtable}{|c|c|c|c|}
	\hline
	{\bf Nodes} & {\bf Edges} & \la & {\bf {\sc problog2}} \\
	\hline
	\hline
	366 &	363 &	6.733s &	0.977s\\
	\hline
	1677 &	2086 &	279.87s &	Timeout\\
	\hline
	1982 &	4143 &	239.916s	& Timeout \\
	\hline
	2291 &	6528 &	481.177s &	Timeout\\
	\hline
	2588 &	9229 &	535.947s &	Timeout\\
	\hline
	2881 &	12248 &	892.525s &	Timeout\\
	\hline
	3168 &	15583 &	1084.231s &	Timeout\\
	\hline
	3435 &	19204 &	958.882s &	Timeout\\
	\hline
	3724 &	23135 &	Timeout &	Timeout\\
	\hline
	\end{longtable}
		\caption{{\sc problog2} vs. \la Comparison on Biomine Network}
\end{table}

We perform MAP inference for comparison. The evidence contains one single fact
\begin{verbatim}
p("hgnc_983","pubmed_11749053")
\end{verbatim}
for all the experiments. Note that {\sc problog2} only considers facts that are actually needed to ground the evidence and thus the result of MAP inference from {\sc problog2} is not a complete interpretation. On the other hand, {\sc lpmln2asp} outputs a complete interpretation.

Table \ref{tab:biomin} shows the results of the experiment. Apart from the smaller graph instances where {\sc problog2} is faster than {\sc lpmln2asp}, \la significantly outperforms {\sc problog2} for medium to large graphs for MAP inference. In fact, for graphs with nodes greater than 1677 {\sc problog2} times out. For Marginal inference, to check for the probability of path between two genes, \la times out with just 25 nodes and therefore it is infeasible to experiment for marginal probability on \la. The sampling based approach of {\sc problog2} computes the probability of a path from \lstinline|'hgnc_983'| to \lstinline|'hgnc_620'| in 13 seconds. This experiment goes on to show that for MAP inference, our implementation far outperforms the current implementation of {\sc problog2} while being significantly slower in computing Marginal and Conditional probabilities.

\subsection{Social influence of smokers - Computing MLN using \la}

Following Section~\ref{ssec:mln2asp}, we compare the scalability of \la for MAP inference on MLN encodings and compare with the MLN solvers {\sc alchemy}, {\sc tuffy} and {\sc rockit} used in {\sc lpmln2mln}. We scale the example by increasing the number of people and relationships among them. 

The \la encoding of the example used in the experiment is
\begin{lstlisting}
1.1 cancer(X) :- smokes(X).
1.5 smokes(Y) :- smokes(X), influences(X, Y).
{smokes(X)} :- person(X).
{cancer(X)} :- person(X).
\end{lstlisting}
The {\sc alchemy} encoding of the example is
\begin{lstlisting}
smokes(node)
influences(node,node)
cancer(node)

1.1 smokes(x) => cancer(x)
1.5 smokes(x) ^ influences(x,y) => smokes(y)
\end{lstlisting}
and is run with the command line
\begin{lstlisting}
infer -m -i input -e evidence -r output -q cancer -ow smokes,cancer
\end{lstlisting}
The {\sc tuffy} encoding of the example is\footnote{* makes the predicate closed world assumption}
\begin{lstlisting}
smokes(node)
*influences(node,node)
cancer(node)

1.1 smokes(x) => cancer(x)
1.5 smokes(x) , influences(x,y) => smokes(y)
\end{lstlisting}
and is run with the command line
\begin{lstlisting}
java -jar tuffy.jar -i input -e evidence -r output -q cancer
\end{lstlisting}
The {\sc rockit} encoding of the example is
\begin{lstlisting}
smokes(node)
*influences(node,node)
cancer(node)

1.1 !smokes(x) v cancer(x)
1.5 !smokes(x) v !influences(x,y) v smokes(y)
\end{lstlisting}
and is run with the command line
\begin{lstlisting}
java -jar rockit.jar -input input -data evidence -output output
\end{lstlisting}

The data was generated such that for each person $p$, the person \emph{smokes} with an 80\% probability, and $p$ \emph{influences} every other person with a 60\% probability. We generate evidence instances based on different number of persons ranging from 10 to 1000. We compare the performance of the solvers based on the time it takes to compute the MAP estimate. The experiment was run on a 40 core Intel(R) Xeon(R) CPU E5-2640 v4 @ 2.40GHz machine with 128 GB of RAM. The timeout for the experiment was set to 20 minutes. 

\begin{center}
	\begin{longtable}{|c|c|c|c|c|}
		\hline
		\# of Persons & \la {\bf w.} {\sc clingo} {\bf 4.5} & \al \ {\bf 2.0}& \tu \ {\bf 0.3}& \ro \ {\bf 0.5}\\
		\hline
		\hline
		10 & 0 & 0.04 & 1.014 & 0.465 \\
		\hline
		50 & 0.03 & 1.35 & 1.525 & 0.676 \\
		\hline
		100 & 0.10 & 18.87 & 1.560 & 0.931 \\
		\hline
		200 & 0.32 & 435.71 & 2.672 & 1.196 \\
		\hline
		300 & 0.7 & Timeout & 4.054 & 1.660 \\
		\hline
		400 & 1.070 & Timeout & 4.505 & 1.914 \\
		\hline
		500 & 1.730 & Timeout & 5.935 & 2.380 \\
		\hline
		600 & 2.760 & Timeout & 7.683 & 2.822 \\
		\hline
		700 & 3.560 & Timeout & 10.390 & 3.274 \\
		\hline
		800 & 4.72 & Timeout & 11.384 & 3.727 \\
		\hline
		900 & Timeout & Timeout & 12.056 & 4.012 \\
		\hline
		1000 & Timeout & Timeout & 12.958 & 4.678 \\
		\hline
		\hline
		\caption{Performance of solvers on MLN program}
		\label{tab:performance-mln}
	\end{longtable}
\end{center}

Table \ref{tab:performance-mln} lists the computation time in seconds for each of the four solvers on instances of domains of varying size. \la is the best performer for the number of people till 600. {\sc alchemy} is the worst performer out of all 4 and for instances with number of people greater than 200 it times out. As expected, for {\sc alchemy}, grounding is the major bottleneck. For the instance with 200 persons, {\sc alchemy} grounds it in 422.85 seconds and only takes 9 seconds to compute the MAP estimate. 
{\sc tuffy} and {\sc rockit} have more scalable grounding times.
{\sc rockit} has the best results amongst all the solvers. This experiment shows that for medium sized instances, our implementation is comparable to  the fastest available solver for MAP inference on MLN programs.



\end{appendix}

\end{document}